%% file: main.tex
\documentclass{article}
\usepackage{algorithm}
\usepackage{algpseudocode}
\usepackage{xcolor}
\usepackage{amsmath}
\usepackage{dsfont}
\usepackage{subcaption}
\usepackage{amssymb}
\usepackage{graphicx}
\usepackage{multirow}
\usepackage{booktabs}
\usepackage{wrapfig}
\usepackage[title, titletoc]{appendix}
\usepackage{amsthm}
\usepackage{svg}
\usepackage[preprint]{corl_2026} 
\theoremstyle{plain} 
\newtheorem{theorem}{Theorem}
\theoremstyle{remark}
\newtheorem{remark}{Remark}
\algtext*{EndWhile}
\algtext*{EndIf}
\algtext*{EndFor}

\title{Gradient-Free Neural Hamilton-Jacobi Reachability for Scalable Safety-Critical Control}

\author{
  Zeyuan Feng\thanks{Equal contribution.}\\
  Department of Aeronautics and Astronautics\\
  Stanford University, United States\\
  \texttt{zeyuanf@stanford.edu} \\
  \And
  Ali Fuat Sahin\footnotemark[1] \\
  Department of Mechanical Engineering \\
  École Polytechnique Fédérale de Lausanne, Switzerland \\
  \texttt{alifuatpc@gmail.com} \\
  \And
  Santiago Thorup \\
  Department of Aeronautics and Astronautics\\
  Stanford University, United States\\
  \texttt{sthorup4@stanford.edu} \\
  \And
  Somil Bansal \\
  Department of Aeronautics and Astronautics\\
  Stanford University, United States\\
  \texttt{somil@stanford.edu} \\
}
\algrenewcommand\algorithmicindent{0.5em}

\begin{document}
\vspace{-2em}
\maketitle
\vspace{-2em}
\begin{abstract}
    Hamilton-Jacobi (HJ) reachability provides a principled framework for synthesizing safety certificates and robust controllers for safety-critical robotic systems. However, applying reachability analysis to high-dimensional nonlinear systems remains challenging: classical grid-based solvers suffer from the curse of dimensionality, continuous-time neural solvers require accurate spatial value gradients, and reinforcement-learning-based approaches often suffer from weak boundary anchoring and non-stationary adversarial policy optimization.
    We propose a discrete-time neural reachability framework for control-disturbance-affine systems that learns backward reachable tubes (BRTs) and backward reach-avoid tubes (BRATs) through Bellman-Isaacs value propagation.
    Our key idea is to combine equation-driven self-supervision with structured policy learning: rather than computing explicit PDE-gradients, we exploit the bang-bang structure of optimal safety interventions to construct approximate teacher actions from gradient-free value probes, converting adversarial actor learning into supervised policy learning. 
    To stabilize long-horizon value propagation, we leverage the learned actor to train the value function backward from the terminal boundary using a windowed temporal curriculum, where each window is used as the boundary condition for the next window.
    Across benchmark problems up to 80 dimensions, our method learns accurate reachability value functions while improving stability over existing learning-based solvers. We further demonstrate observation-space scalability on F1-tenth racing with over 16,000-dimensional egocentric inputs. The learned safety filter generalizes zero-shot to unseen tracks and transfers to a physical RC car, achieving real-time robust collision avoidance. \footnote{The code is available at \url{https://github.com/TheZeyuanFeng/grad_free_reach/tree/main} and project webpage can be found at: \url{https://alifuatsahin.github.io/grad_free_reach/}}
\end{abstract}

\keywords{Robot Safety, Hamilton-Jacobi Reachability, Safety-Critical Control}

\input{intro}
\input{formulation}

\input{method}


\input{results}

\input{conclusion}

\input{limitations}


\acknowledgments{This research is supported in part by the DARPA Assured Neuro Symbolic Learning and Reasoning (ANSR) program, the NSF CAREER program (2240163), and the TRI University Research Program (URP) 3.0.}
\clearpage
\bibliography{references, bansal_papers}  

\input{appendix}

\end{document}

%% file: intro.tex
\vspace{-0.5em}
\section{Introduction}
\vspace{-0.5em}
From autonomous vehicles to aerial robots, robotic systems are increasingly deployed in safety-critical settings. In these domains, failures can arise not only from nominal planning errors, but also from disturbances, model mismatch, and adversarial interactions. This requires control policies and safety certificates that are robust to worst-case disturbances. Hamilton-Jacobi (HJ) reachability provides a principled framework for this purpose: by solving a dynamic game between the controller and an adversarial disturbance, reachability analysis computes the set of states from which safety can be guaranteed \cite{mitchell2005time}, or a target can be reached while avoiding failure \cite{Margellos11}.

Despite its strong theoretical foundations, HJ reachability remains difficult to apply to high-dimensional nonlinear robotic systems. Classical grid-based methods \cite{mitchell2004toolbox, bui2022optimizeddp} solve the Hamilton-Jacobi-Isaacs (HJI) partial differential equation with strong numerical guarantees, but their computational cost grows exponentially with the state dimension \cite{8263977}. 
To overcome this curse of dimensionality, recent works have explored learning-based approximations of HJ reachability. 
One prominent direction uses Physics-Informed Neural Networks (PINNs) to solve the continuous-time HJI PDE by minimizing its residual \cite{bansal2021deepreach, feng2025bridging, 11127972, chilakamarri2024reachability, sharpless2024linear}. The key appeal of this approach is that learning is self-supervised: the PDE itself provides a dense training signal, allowing the value function to be optimized without requiring ground-truth reachable sets. 
However, evaluating the HJI residual requires accurate spatial value gradients, $\nabla_x V$, throughout the state space. 
In long-horizon, high-dimensional, or nonsmooth reachability problems, these gradients can be difficult to represent and optimize reliably. 
While architectures such as SIRENs can improve derivative fidelity \cite{bansal2021deepreach}, they often introduce optimization instability and become increasingly difficult to scale in high-capacity settings \cite{raissi2019physics, sitzmann2020implicit}.

A second line of work formulates reachability as a discrete-time zero-sum reinforcement learning (RL) problem \cite{fisac2019bridging,hsu2021safety,hsu2023isaacs,li2025certifiabledeeplearningreachability}. This perspective is attractive because it naturally handles learned dynamics, complex simulators, and high-dimensional systems without requiring direct access to continuous-time PDE derivatives. However, RL-based reachability methods suffer from a fundamental moving-target problem: value estimates, control policies, and adversarial disturbance policies are all updated simultaneously through mutually dependent supervision signals. As the controller adapts to the current disturbance policy, the disturbance simultaneously adapts to the evolving controller, while both remain coupled to a changing value function estimate. Without a stable boundary-conditioned anchor, these recursive updates can accumulate bootstrap error over long horizons, leading to unstable optimization and inaccurate safety boundary propagation.

To address these challenges, we propose a discrete-time, windowed reachability learning framework for computing backward reachable tubes (BRTs) or backward reach-avoid tubes (BRATs). The method propagates safety values backward from the terminal boundary through a temporal curriculum, decomposing the long-horizon problem into sequential local learning stages. Within each temporal window, approximate optimal policies are constructed from the current value estimate through gradient-free bang-bang probing and refined through supervised imitation learning. The value function is then updated using Bellman-Isaacs supervision from both teacher-based one-step targets and student-policy rollout targets. To reduce long-horizon bootstrap drift, each temporal segment is further refined through rollout-anchored boundary correction before serving as the boundary condition for the preceding window. Collectively, this formulation enables scalable reachability analysis and stable long-horizon value propagation for high-dimensional systems.

Our approach can be viewed as combining the most useful aspects of PINN- and RL-based formulations while avoiding their primary failure modes. Akin to PINN-based methods, we rely on self-supervision from the governing optimality equation: instead of minimizing a continuous-time HJI PDE residual, we minimize the discrete-time Bellman-Isaacs error induced by the dynamic programming recursion. This preserves the physics-informed learning signal from PINN-style methods, but avoids their dependence on accurate spatial value gradients, which are difficult to approximate in high-dimensional, non-smooth problems. 
At the same time, our formulation inherits the flexibility of discrete-time RL-style methods, which can be applied to complex sampled (black-box) dynamics. However, rather than learning control and disturbance policies through coupled adversarial actor-critic updates, we exploit the bang-bang structure of the optimal policies to convert actor learning into a supervised learning problem. These supervised policies, together with the terminal boundary condition, provide stable anchors for Bellman target construction, making value propagation substantially more stable than unconstrained minimax policy-value optimization.
Through several simulated case studies and real-world robotic experiments, we demonstrate that our method learns accurate BRT and BRAT value functions in high-dimensional adversarial systems, improves training stability over existing learning-based reachability solvers, and enables robust closed-loop safety-critical control under disturbances.

%% file: formulation.tex
\vspace{-0.5em}
\section{Problem Formulation}
\vspace{-0.5em}
Consider a discrete-time nonlinear system with control-disturbance affine dynamics:
\begin{equation}
    x_{k+1} = f(x_k, u_k, d_k) = f_0(x_k) + f_u(x_k)u_k + f_d(x_k)d_k,
\end{equation}
where $x_k \in \mathcal{X}$, $u_k \in \mathcal{U}= \{ u \in \mathbb{R}^{n_u} \mid \underline{u} \le u \le \overline{u} \}$, and $d_k \in \mathcal{D}= \{ d \in \mathbb{R}^{n_d} \mid \underline{d} \le d \le \overline{d} \}$ denote the state, control, and disturbance at time step $k$, respectively. The disturbance $d_k$ is treated adversarially to capture worst-case safety margins, representing both exogenous environmental disturbances, such as wind gusts or adversarial agents, and internal epistemic uncertainties, such as unmodeled friction. This control-disturbance affine setting remains expressive enough to model a broad class of robotic systems, including autonomous vehicles, drones, and manipulators.

Let the failure set be $\mathcal{L} = \{x \mid \ell(x) \leq 0\}$ and the target set be $\mathcal{G} = \{x \mid g(x) \leq 0\}$. 
We study two closely related finite-horizon reachability problems. The first is an avoid-only safety problem, characterized by a backward reachable tube (BRT), denoted $\mathcal{B}_S(k)$. BRT contains all states at time step $k$ from which failure is inevitable under worst-case disturbances. The second is a reach-avoid problem, characterized by a backward reach-avoid tube (BRAT), denoted $\mathcal{B}_L(k)$. BRAT consists of states from which the target can be reached while avoiding safety violations despite worst-case disturbances. Mathematically,
\begin{equation}
\begin{aligned}
    \mathcal{B}_S(k)&=\left\{x \mid \forall u(\cdot), \exists d(\cdot), \exists \kappa \in[k, K], x_{x, k}^{u, d}(\kappa) \in \mathcal{L}\right\}, \\ 
    \mathcal{B}_L(k)&=\left\{x \mid \forall d(\cdot), \exists u(\cdot), \exists \kappa \in[k, K], x_{x, k}^{u, d}(\kappa) \in \mathcal{G}, \forall s \in [k,\kappa], x_{x, k}^{u, d}(s) \notin \mathcal{L} \right\}, \\ 
\end{aligned}
\end{equation}
where $x_{x,k}^{u,d}$ denotes the trajectory induced by policies $u(\cdot)$ and $d(\cdot)$ starting from state $x$ at time step $k$. For brevity, the remainder of the main text focuses primarily on the BRT formulation. The full BRAT formulation and corresponding Bellman-Isaacs recursions are provided in Appendix~\ref{app:brat}.

For a given task, we synthesize the relevant tube by learning the corresponding value function. 
In avoid-only tasks, we learn the BRT value function $V_S^*$ and its associated safety-preserving policy. The value function is given as \cite{bansal2017hamilton}:
\begin{equation}
\begin{aligned}
    V_S^*(x, k) &= \max_{\mathbf{u}} \min_{\mathbf{d}} \left[ \min_{\kappa \in [k, K]} \ell(x_\kappa)  \right]. \\
\end{aligned}
\end{equation}
Under this convention, $\mathcal{B}_S(k) = \{ x: V_S^*(x_t, k)\leq 0 \}$. The optimal value function satisfies the following discrete-time Bellman-Isaacs recursions \cite{hsu2023isaacs}:
\begin{equation}
\label{eq:bellman_recursion}
\begin{aligned}
    V_S^*(x, k) &= \min \left(\ell(x), \max_{u \in \mathcal{U}} \min_{d \in \mathcal{D}} V_S(f(x, u, d), k+1) \right),\\
\end{aligned}
\end{equation}
with the boundary condition (BC) $b(x)$: $V^*_S(x, K) = \ell(x)$.

Our objective is to approximate these Bellman-Isaacs solutions with neural value functions and their corresponding optimal policies. Specifically, we aim to learn a neural value function $V_\theta(x,k)$ that approximates $V_S^*(x,k)$
thereby providing a quantitative certificate of safety or safe reachability over the state space. In parallel, we learn shared-network control and disturbance policies, $\pi_\phi^u(x,k)$ and $\pi_\phi^d(x,k)$, though our framework readily accommodates separate network architectures.

%% file: method.tex
\vspace{-0.5em}
\section{Method}
\label{sec:method}
\vspace{-0.5em}
Our method learns the neural value function associated with BRT or BRAT. Our key idea is to train backward from the terminal boundary using a windowed temporal curriculum: each window learns a local value function and corresponding control and disturbance policies through approximate teacher supervision and Bellman-error minimization, and then undergoes a finetuning phase for reducing any learning errors before being frozen as the boundary condition for the preceding window.

\emph{\textbf{Temporal Partitioning via Windowed Network Architectures.}} Approximating long-horizon Bellman-Isaacs solutions with a single monolithic network can lead to representational overload, particularly when the value function exhibits sharp temporal variation or nonsmooth reachable-set boundaries. 
To reduce this burden, we partition the discrete horizon $K$ into sequential temporal windows, each represented by a specialized network.
Let $\mathcal{K}=\{0,1,\dots,K\}$. 
We define a neural value function $V_\theta: \mathcal{X} \times \mathcal{K} \to \mathbb{R}$ together with a shared policy network $\pi_\phi^{u,d}: \mathcal{X} \times \mathcal{K} \to \mathcal{U} \times \mathcal{D}$ that parameterizes both the control and disturbance policies.
The value function is boundary-aware by construction at the terminal time $k=K$ and uses independent network weights within each temporal window:
\begin{equation}
V_\theta(x,k) = \begin{cases}
b(x), & \text{if } k=K, \\
V_\theta^w(x,k), & \text{if } K - wW \leq k < K - (w-1)W,
\end{cases}
\label{eq:windowed_value}
\end{equation}
where $W\in\mathbb{Z}^+$ denotes the window length, $w\in\mathbb{Z}^+$ is the window index counted backward from the terminal time, and $b(x)$ is the terminal boundary condition.

The shared policy parameterization is decomposed into temporal windows:
$\pi_\phi^{(\cdot)}(x,k)=\pi_\phi^{(\cdot),w}(x,k)$
for $K-wW \leq k < K-(w-1)W$,
where $(\cdot)\in\{u,d\}$ denotes the control and disturbance policies.
This windowed representation decomposes the long-horizon dynamic programming problem into shorter-horizon subproblems. Each network focuses its representational capacity on a localized temporal segment, while later-time windows, once trained and frozen, serve as fixed boundary conditions for earlier windows.

\begin{figure*}[t]
\centering
  \begin{minipage}[t]{0.68\linewidth}
    \vspace{0pt} 
    \centering
    \includegraphics[width=0.99\linewidth]{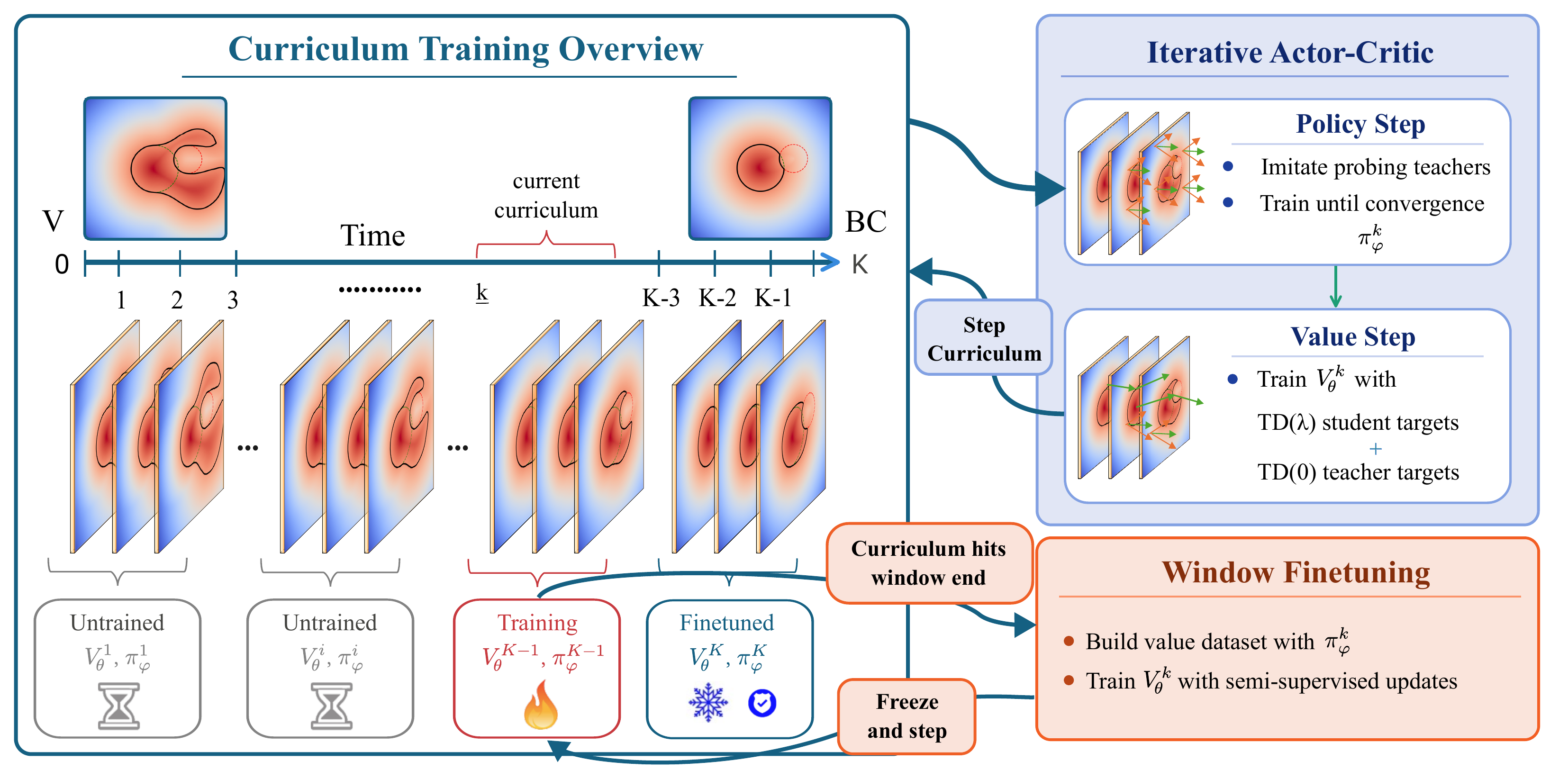}
    \vspace{-3mm}
    
  \end{minipage}
  \hfill
  \begin{minipage}[t]{0.3\linewidth}
    \vspace{-0.8em}
    \begin{algorithm}[H]
    \algrenewcommand{\algorithmiccomment}[1]{\hskip3em$\rightarrow$ #1}
        \caption{Overview}
        \label{alg:training_overview}
        \begin{algorithmic}
        \Require $K$, $g$, $\ell$, $W$
        \State $\underline{k} \gets K$, initialize NNs
        \While{$\underline{k} \geq 0$}
        \State $\pi \gets \text{Supervise}(\hat{u}^{\underline{k}:K}_{\text{teacher}})$
        \State $V \gets \text{Update}(y_{\text{mixed}}^{\underline{k}:K})$
        \If{{\small $(K-\underline{k}) \text{ mod } W = 0$}}
             \State $\text{FinetunePhase}$
             \State Freeze current window
             \State Unfreeze next window
        \EndIf
        \State $\underline{k} \gets \underline{k}- 1$
        \EndWhile
    \end{algorithmic}
    \end{algorithm}
  \end{minipage}
  \caption{Method Overview: backward-temporal curriculum with actor-critic learning.}
  \label{fig:training_overview}
  \vspace{-1.5em}
\end{figure*}
\emph{\textbf{Backward Curriculum.}} As illustrated in Alg.~\ref{alg:training_overview} and Fig.~\ref{fig:training_overview}, training proceeds backward along the discrete time axis, starting from the terminal boundary condition.
At each curriculum step, the algorithm alternates between policy training and value training, followed by a window-finetuning phase whenever a temporal window is completed. Each phase is trained to near convergence before the curriculum advances.
Crucially, the policy update is performed before the value update to ensure that the updated policies are available to construct accurate temporal-difference (TD) targets for value learning. Whenever the elapsed curriculum horizon $(K - \underline{k})$ reaches a multiple of the window duration $W$, the algorithm triggers a finetuning phase to calibrate the value network against rollout-derived targets, effectively aligning the policies and the value function.
After fine-tuning, the parameters of the completed window, including both value and policy networks, are frozen and used as the boundary condition for the preceding window. 
We now detail each of these phases.

\vspace{-0.5em}
\subsection{Policy Training via Bang-Bang Probing}
\vspace{-0.5em}
The policy training phase learns control and disturbance policies that approximate the optimal actions in the Bellman-Isaacs recursion in Eqn.~\ref{eq:bellman_recursion}.
To mitigate the ``moving target'' problem prevalent in joint control-disturbance optimization (a minimax update), 
we construct approximate teacher actions from the current value estimate and use them to supervise the student policy networks:
\begin{equation}
\mathcal{L}_{\pi} = \frac{1}{N} \sum_{i=1}^{N} w_i\left(\left\|\pi_{\phi}^{u}(x_i,k_i)-\hat{u}^{*}(x_i,k_i) \right\|_2^2 + \left\|\pi_{\phi}^{d}(x_i,k_i)-\hat{d}^{*}(x_i,k_i)\right\|_2^2\right),
\label{eq:policy_obj}
\end{equation}
where $x_i \in \mathcal{X}$ are sampled states, $k_i \in \mathcal{K}_{\text{curr}} = \{\underline{k}, \underline{k}+1, \dots, K-(w-1)W-1\}$ are time indices, $(\hat{u}^*,\hat{d}^*)$ are the estimated teacher actions, and $w_i$ weights the confidence of each teacher label.
\paragraph{Estimating Teacher Policies.}
Our teacher construction is inspired by the bang-bang structure of optimal policies in continuous-time Hamilton-Jacobi reachability \cite{bansal2017hamilton, mitchell2007toolbox}. For continuous-time control-disturbance-affine dynamics
$\dot{x} = f_c(x,u,d) = f_{c,0}(x) + f_{c,u}(x)u + f_{c,d}(x)d$, the continuous-time optimal actions are strictly bang-bang:
\begin{equation}
\label{eq:ct_opt_policy}
\hat{u}_j^* =
\begin{cases}
\overline{u}_j, & C_{u,j} > 0, \\
\underline{u}_j, & C_{u,j} \leq 0,
\end{cases}
\qquad
\hat{d}_j^* =
\begin{cases}
\underline{d}_j, & C_{d,j} > 0, \\
\overline{d}_j, & C_{d,j} \leq 0, 
\end{cases}
\end{equation}
where $C_{u,j}$ and $C_{d,j}$ are the $j$-th components of the gradient projections $f_{c,u}(x)^\top \nabla_x V$ and $f_{c,d}(x)^\top \nabla_x V$, respectively.
For the opposite minimization-maximization convention (e.g., in BRAT), the endpoint assignments are reversed accordingly.

Unlike PINN-based reachability solvers, our method does not require analytic spatial gradients $\nabla_x V_\theta$ for policy extraction. Instead, we infer the relevant gradient-projection signs through discrete value comparisons of forward-simulated next states. For each control dimension $j$, we evaluate $\Delta V_j^u := V_\theta(x^{\overline{u}_j}, k+1) - V_\theta(x^{\underline{u}_j}, k+1) \approx \Delta t \, \nabla_x V_\theta(x, k+1)^\top f_{c,u}(x) \big(\overline{u}_j - \underline{u}_j\big) = \Delta t \, C_{u,j} \big(\overline{u}_j - \underline{u}_j\big)$,
where $x^{\overline{u}_j}$ and $x^{\underline{u}_j}$ denote the next discrete-time states obtained by setting only the $j$-th control dimension to its upper or lower bound, while holding the remaining action dimensions fixed at a baseline value. 
Since $\overline{u}_j-\underline{u}_j>0$, the sign of $\Delta V_j^u$ approximates the sign of $C_{u,j}$ for sufficiently small discretization step $\Delta t$. An analogous probing procedure yields $\mathrm{sign}(C_{d,j}) \approx \mathrm{sign}(\Delta V_j^d)$ for each disturbance dimension.

This procedure requires only $2(n_u+n_d)$ forward value evaluations per sampled state and provides an efficient, gradient-free approximation of the teacher actions $(\hat{u}^*,\hat{d}^*)$. To reduce the influence of ambiguous labels, such as regions where the value is locally flat or multiple actions yield nearly identical next-step values (e.g., $\nabla V_x  \rightarrow \mathbf{0}$), we downweight uncertain teachers through the confidence weight $w_i$, computed as the inverse variance of these directional probe scores.
\vspace{-0.3em}
\subsection{Value Training via Mixed Temporal-Difference Targets}
\vspace{-0.3em}
A fundamental challenge in this decoupled actor-critic scheme is compounding suboptimality: if a suboptimal student policy is used to train the value network, the resulting value estimate may become biased, which in turn yields corrupted teacher actions in subsequent policy updates. 
To mitigate this error propagation, we train the value function using a mixture of one-step TD targets generated from teacher actions and multi-step TD targets generated from student-policy rollouts. The teacher targets keep the value update anchored to the Bellman-Isaacs recursion to preserve optimality, while the student targets improve consistency with the learned policies over longer rollouts.

For a sampled state-time pair $(x_0,k_0)$, we first compute a one-step teacher target. Let $x_1 = f(x_0,\hat{u}^*,\hat{d}^*)$ be the next state under the teacher actions. The teacher TD target is then
$y_{\text{teacher}} = \mathcal{J}\bigl(x_0, V_\theta(x_1, k_0 + 1)\bigr)$,
where $\mathcal{J}(x,V) = \min\bigl(\ell(x), V\bigr)$ for BRT computations.

We also construct a multi-step student target by rolling out the learned policies
$(\pi_\phi^u,\pi_\phi^d)$ for $M$ steps, where $M$ is sampled from a geometrically decaying distribution. Let $x_{0:M}$ denote the resulting trajectory and let
$V_\theta(x_M,k_0+M)$ be the bootstrapped terminal value. The student target is $y_{\text{student}} = \mathcal{J}\bigl(x_{0:M}, V_\theta(x_M, k_0 + M)\bigr)$.
For BRT, this reduces to the minimum safety margin along the rollout, clipped by the bootstrapped value at the terminal rollout state. 

The value network are trained to minimize the MSE residual of $V_\theta$ against the evenly mixed targets (i.e., 50\% student samples). 
To improve boundary-condition compliance, we oversample time indices near the current boundary, $k_0=K-(w-1)W-1$, when drawing $k_0 \in \mathcal{K}_{\text{curr}}$. 
\vspace{-0.3em}
\subsection{Finetune Phase}
\vspace{-0.3em}
Although mixed TD targets reduce short-horizon suboptimality, the backward windowed curriculum remains susceptible to bootstrap drift across temporal boundaries. If a completed window is frozen with biased value estimates, those errors become the boundary condition for the preceding window and can propagate backward through the horizon. To reduce this drift, we perform a boundary-correction fine-tuning phase at the end of each temporal window before freezing its parameters.

This phase calibrates the value network against rollout-derived targets under the \textit{learned} student policies. Specifically, we optimize a semi-supervised objective that combines local Bellman consistency with supervised anchor targets obtained from discrete-time dynamics rollouts:

\begin{equation}
\label{eq:ft_loss}
\resizebox{\linewidth}{!}{$
\mathcal{L}_{\text{FT}} = \mathcal{L}_{\text{TD}}^{\text{student}} + \lambda_{\text{sup}} \mathcal{L}_{\text{sup}} = \frac{1}{B} \left( \sum_{i=1}^{B} \left( V_\theta(x_i,k_i) - y_{\text{student},i} \right)^2 + \lambda_{\text{sup}} \sum_{m=1}^{B} w_m \left( V_\theta(x_m,k_m) - G_m \right)^2 \right).
$}
\end{equation}

Here, $\mathcal{L}_{\mathrm{TD}}^{\mathrm{student}}$ uses only student-policy TD targets, omitting teacher targets so that the update aligns the value network with the policies being evaluated. The second term anchors the value network to empirical rollout targets $G_m$, sampled from a static dataset.
To construct the $G_m$ dataset, we forward-simulate from $(x_m,k_m)$ using the learned student policies until reaching the upper boundary of the active window, $k_{\text{bound}}=K-(w-1)W$, where $w\geq 1$ is the current window index. 
The anchor target is computed from the same stage-wise reachability cost along this rollout and bootstrapped with the frozen value function at $k_{\mathrm{bound}}$: $G^{\text{BRT}}(x,k) = \mathcal{J}\bigl(x_{k:k_{\mathrm{bound}}}, V_\theta(x_{k_{\mathrm{bound}}}, k_{\mathrm{bound}})\bigr)=  \min \left[ \min_{\kappa \in [k, k_{\text{bound}}]} \ell(x_\kappa), \; V_\theta \left( x_{k_{\text{bound}}}, k_{\text{bound}}\right) \right].$
The first term evaluates whether the target is reached safely within the active window, while the second term bootstraps from the frozen boundary value.
To discourage false-positive safety predictions, we apply asymmetric weights
$w_m = 1 + \lambda_{\text{fp}} \mathds{1}\{V_\theta(x_m,k_m) > 0 \text{ and } G_m < 0\}$,
where $\lambda_{\mathrm{fp}}$ controls the penalty for states predicted to be safe but evaluated as unsafe by rollout. The fine-tuning phase is run with a reduced learning rate until convergence. The active window is then frozen and used as the boundary condition for the preceding window.

%% file: results.tex
\vspace{-0.5em}
\section{Experimental Results}
\vspace{-0.5em}
\label{sec:results}
We evaluate our framework across three benchmarks spanning distinct types of reachability problems: (i) \textbf{Two-Vehicle Narrow Passage} \cite{bansal2021deepreach} (Reach-Avoid); (ii) \textbf{Quadrotor Gate Avoidance} (Avoid); and (iii) \textbf{40D/80D Publisher-Subscriber} \cite{pmlr-v283-sharpless25a} (Reach). We compare against three state-of-the-art baselines: \textbf{DeepReachMPC} \cite{feng2025bridging} (a monolithic continuous-time PINN-based solver), \textbf{RARL} \cite{hsu2021safety} (an RL-based solver for BRAT), and \textbf{ISAACS} \cite{hsu2023isaacs} (an actor-critic RL baseline for BRT). All baselines are implemented using their official codebases with careful parameter tuning. Detailed training configurations and environment setups are deferred to Appendix \ref{app:experimental_setup}.

\textbf{Evaluation metrics.} We track the \textbf{False Positive Rate (FPR)} (predicted safe but empirical failure, indicating dangerous over-optimism) and \textbf{False Negative Rate (FNR)} (predicted unsafe but empirical success, indicating over-conservatism) of the value function. We further report \textbf{Success Rate} to assess policy optimality, and \textbf{Training Time} to evaluate computational efficiency. Each metric is evaluated over 100K samples across 5 random seeds.

\begin{table*}[t]
\centering
\caption{Quantitative comparison of open-loop safety and closed-loop control performance. Results report mean and standard deviation over 5 random seeds.}
\label{tab:nav_benchmarks}
\resizebox{\textwidth}{!}{
\begin{tabular}{l cccc cccc}
\toprule
\multirow{2}{*}{\textbf{Method}} 
& \multicolumn{4}{c}{\textbf{Narrow Passage}} 
& \multicolumn{4}{c}{\textbf{Quadrotor Gate Avoidance}} \\
\cmidrule(lr){2-5} \cmidrule(lr){6-9}

& \textbf{FPR \%} $\downarrow$
& \textbf{FNR \%} $\downarrow$
& \textbf{Succ \%} $\uparrow$
& \textbf{Time (h)} $\downarrow$
& \textbf{FPR \%} $\downarrow$
& \textbf{FNR \%} $\downarrow$
& \textbf{Succ \%} $\uparrow$
& \textbf{Time (h)} $\downarrow$ \\
\midrule

DeepReachMPC
& $\mathbf{0.00 \pm 0.00}$
& $100.0 \pm 0.00$
& $2.50 \pm 0.90$
& $5.0$
& $2.0 \pm 0.8$
& $78.40 \pm 14.00$
& $0.63 \pm 0.07$
& $\mathbf{4.0}$ \\

RL Baseline
& $\mathbf{0.00 \pm 0.00}$
& $93.40 \pm 0.70$
& $3.70 \pm 0.72$
& $5.5$
& $30.38 \pm 3.39$
& $\mathbf{0.00 \pm 0.00}$
& $0.00 \pm 0.00$
& $5.5$ \\

\textbf{Ours}
& $100.00 \pm 0.00$
& $\mathbf{0.00 \pm 0.00}$
& $\mathbf{99.65 \pm 0.31}$
& $\mathbf{4.0}$
& $\mathbf{0.5 \pm 0.0}$
& $9.70 \pm 1.00$
& $\mathbf{5.01 \pm 0.13}$
& $5.5$ \\
\bottomrule
\end{tabular}}
\vspace{-1em}
\end{table*}

\begin{table*}[t]
\centering
\caption{Scalability stress-testing on high-dimensional Publisher-Subscriber network dynamics.}
\label{tab:network_benchmarks}
\resizebox{\textwidth}{!}{
\begin{tabular}{l ccc ccc}
\toprule
\multirow{2}{*}{\textbf{Method}} & \multicolumn{3}{c}{\textbf{40D Network Dynamics}} & \multicolumn{3}{c}{\textbf{80D Network Dynamics}} \\
\cmidrule(lr){2-4} \cmidrule(lr){5-7}
 & \textbf{FPR \%} $\downarrow$ & \textbf{FNR \%} $\downarrow$ & \textbf{MSE} $\downarrow$ & \textbf{FPR \%} $\downarrow$ & \textbf{FNR \%} $\downarrow$ & \textbf{MSE} $\downarrow$ \\
\midrule
DeepReachMPC & $2.648 \pm 0.310$ & $\mathbf{0.001 \pm 0.001}$ & $\mathbf{0.007 \pm 0.001}$ & $2.115 \pm 0.145$ & $\mathbf{0.000 \pm 0.000}$ & $\mathbf{0.026 \pm 0.001}$ \\
RL Baseline  & $23.306 \pm 0.397$ & $76.228 \pm 0.602$ & $1.717 \pm 0.086$ & $25.581 \pm 0.941$ & $78.696 \pm 0.731$ & $5.886 \pm 0.24783$ \\
\textbf{Ours} & $\mathbf{0.000 \pm 0.000}$ & $9.134 \pm 0.693$ & $0.009 \pm 0.002$ & $\mathbf{0.219 \pm 0.429}$ & $14.263 \pm 1.884$ & $0.072 \pm 0.016$ \\
\bottomrule
\end{tabular}
}
\vspace{-1em}
\end{table*}
\emph{\textbf{Long-Horizon Narrow Passage.}}
Two opposing vehicles are tasked with safely reaching their respective goals through a bottleneck caused by a stranded vehicle. To avoid a head-on collision, one agent must proactively yield early in the extended $T=8.0$s horizon. This requires the reachability solver to robustly maintain the early-time optimal value function, preventing agents from adopting myopic maneuvers that greedily approach the goal at the cost of mild safety violations. As shown in Table~\ref{tab:nav_benchmarks}, our method successfully learns these far-sighted coordination strategies by leveraging temporal partitioning to preserve short-horizon intermediate solutions required for safe traversal.
Notably, we initialize the evaluation trajectories within the state space of interest, where cars start at opposite ends facing their goals with a slow speed, reflecting typical traffic interactions.
These states are safe by construction, providing a reliable evaluation without ground-truth reachability labels.
Our policy achieves near-perfect success: only 70 failures among 20K rollouts. However, because there are no true negatives, these 70 FPs mathematically yield an FPR of 70/(70+0)=100\%
Hence, the reported FPR reflects policy execution flaws over the long horizon, rather than a severely over-optimistic value function.

\emph{\textbf{13D Quadrotor Gate Avoidance.}}
A quadrotor must safely navigate through a gate opening within an otherwise infinitely expansive $y$-$z$ planar obstacle. The vehicle must also maintain a positive $x$-velocity while adhering to linear and angular velocity bounds across all axes. The system's high thrust-to-mass ratio of 20, combined with the non-smooth safety boundaries, introduces severe numerical instability for standard learning-based solvers, especially those utilizing explicit gradient representations during training. Conversely, our gradient-free approach leverages the stable anchors provided by the temporal curriculum and empirical rollouts to robustly propagate the value function. Crucially, while our absolute 5.0\% success rate appears low, it reflects the physical reality of the environment: the vast majority of random initial states are dynamically doomed to violate the $z$-velocity constraint due to gravity. We further validate the optimality of our learned policy by outperforming both the built-in sampling-based MPC solver of DeepReachMPC (0.3\%) and a carefully tuned receding-horizon MPPI controller (2.1\%).

\emph{\textbf{Publisher-Subscriber Systems.}}
This benchmark stress-tests scalability against massive state and action spaces, where the control dimension is exactly the state dimension minus one (representing a single uncontrollable publisher and $N-1$ controllable subscribers). Because the system dynamics are structurally decomposable, it admits a ground-truth solution for the MSE evaluation. As shown in Table~\ref{tab:network_benchmarks}, our framework---despite being formulated as a generalized solver for two-player zero-sum games---achieves a MSE on par with DeepReachMPC, which is engineered for single-player optimal control. This demonstrates that our method scales gracefully to high-dimensional domains.

\emph{\textbf{Ablation on Policy Optimization.}}
We isolate and validate our critical architectural choices using the long-horizon Narrow Passage dynamics as a benchmark. Specifically, we compare our value-grounded teacher-student objective against policy gradient (PG) alternatives, which update the policy networks using TD targets derived from policy rollouts whose lengths are sampled from a geometric distribution between 1 and $m$ steps. As shown in Table~\ref{tab:pg_il}, the teacher-student formulation achieves substantially faster convergence and improved consistency across random seeds. By deriving supervision directly from the extremal-action decisions induced by $V_\theta$ (Eqn.~\ref{eq:policy_obj}), our approach is significantly more robust to the sparse gradient signals inherent to reachability (where trajectory cost depends solely on the extremum state, e.g., $\arg \min_x \ell(x)$). Additionally, the teacher-student paradigm eliminates the need to tune the maximum rollout horizon ($m$), a hyperparameter that heavily dictates the empirical stability of PG-based training. This highlights that our framework maintains the architectural benefits of actor-critic methods while explicitly avoiding the unstable actor-training dynamics inherent to traditional RL. 

Additional ablations evaluating the teacher-mixing ratio, alongside benchmarks on two-player zero-sum games (e.g., pursuer-evader problems), are deferred to Appendix \ref{app:zero_sum} due to space constraints.
\begin{table}[t]
\centering
\small
\setlength{\tabcolsep}{4pt}
\caption{Ablation of policy optimization objectives on the Narrow Passage benchmark.}
\label{tab:pg_il}
\begin{tabular}{l cccc}
\toprule
\textbf{Configuration} & \textbf{FPR \%} $\downarrow$ & \textbf{FNR \%} $\downarrow$ & \textbf{Succ \%} $\uparrow$ & \textbf{Time (mins)} $\downarrow$ \\
\midrule
Policy Gradient - 1 Step & $99.58 \pm 0.09$ & $0.79 \pm 0.74$ & $92.20 \pm 6.11$ & $265.5$\\        
Policy Gradient - 3 Step & $\mathbf{93.57 \pm 8.71}$ & $0.25 \pm 0.13$ & $90.15 \pm 6.87$ & $307.4$ \\
Policy Gradient - 5 Step & $99.75 \pm 0.09$ & $0.28 \pm 0.83$ & $80.81 \pm 3.85$ & $400.1$\\
\textbf{Teacher Student} & $100.0 \pm 0.0$ & $\mathbf{0.0 \pm 0.0}$ & $\mathbf{99.65 \pm 0.31}$ & $\mathbf{262.4}$\\
\bottomrule
\end{tabular}
\vspace{-1.5em}
\end{table}
\vspace{-0.5em}
\subsection{Vision-Based F1Tenth Racing and Sim-to-Real Deployment}
\vspace{-0.5em}
\label{subsec:racing}
We scale our framework to synthesize a safety value function (BRT) for high-speed autonomous racing (max speed 10~m/s), where the vehicle must remain collision-free against the track boundaries with acceleration and steering rate as control inputs. We train the reachability solution jointly across 20 tracks and evaluate it on 3 held-out tracks from \cite{Betz2022_RacingSurvey}, each spanning roughly 120~m $\times$ 120~m. The model takes as input an egocentric $128 \times 128$ Bird's-Eye View (BEV) binary occupancy image concatenated with the vehicle's proprioceptive state (speed, angular velocity, slip angle, steering angle) and a disturbance scale parameter $s_d \in [0.0, 0.3]$. Crucially, the BEV spatial resolution dynamically scales with vehicle speed to guarantee sufficient look-ahead visibility for accurate safety predictions. We note that commonly used low-dimensional inputs, such as LiDAR scans, lack sufficient information for accurate safety prediction in a racing context (e.g., due to occlusion near sharp turns). Because the BEV pixel dynamics are unknown, we treat the system as a black box with underlying control-disturbance-affine dynamics. To ensure robustness against OOD observations and modeling errors, we train against adversarial disturbances including lateral drift and control authority degradation. At test time, the learned value function is equipped with a Discrete-Time Control Barrier Function (DCBF) filter \cite{agrawal2017discrete} to safeguard a poorly-tuned MPPI nominal controller. Across 50 evaluation trials (40s each), the filtered policy achieved zero collisions on both the in-distribution and OOD tracks under a disturbance scale $s_d=0.1$, with an average speed of 9.25 and 9.17~m/s, respectively. Without the filter, the MPPI baseline yields 56\% and 48\% collision rates, respectively.
\begin{wrapfigure}{r}{0.49\textwidth}
  \vspace{-10pt} 
  \centering
  \includegraphics[width=\linewidth]{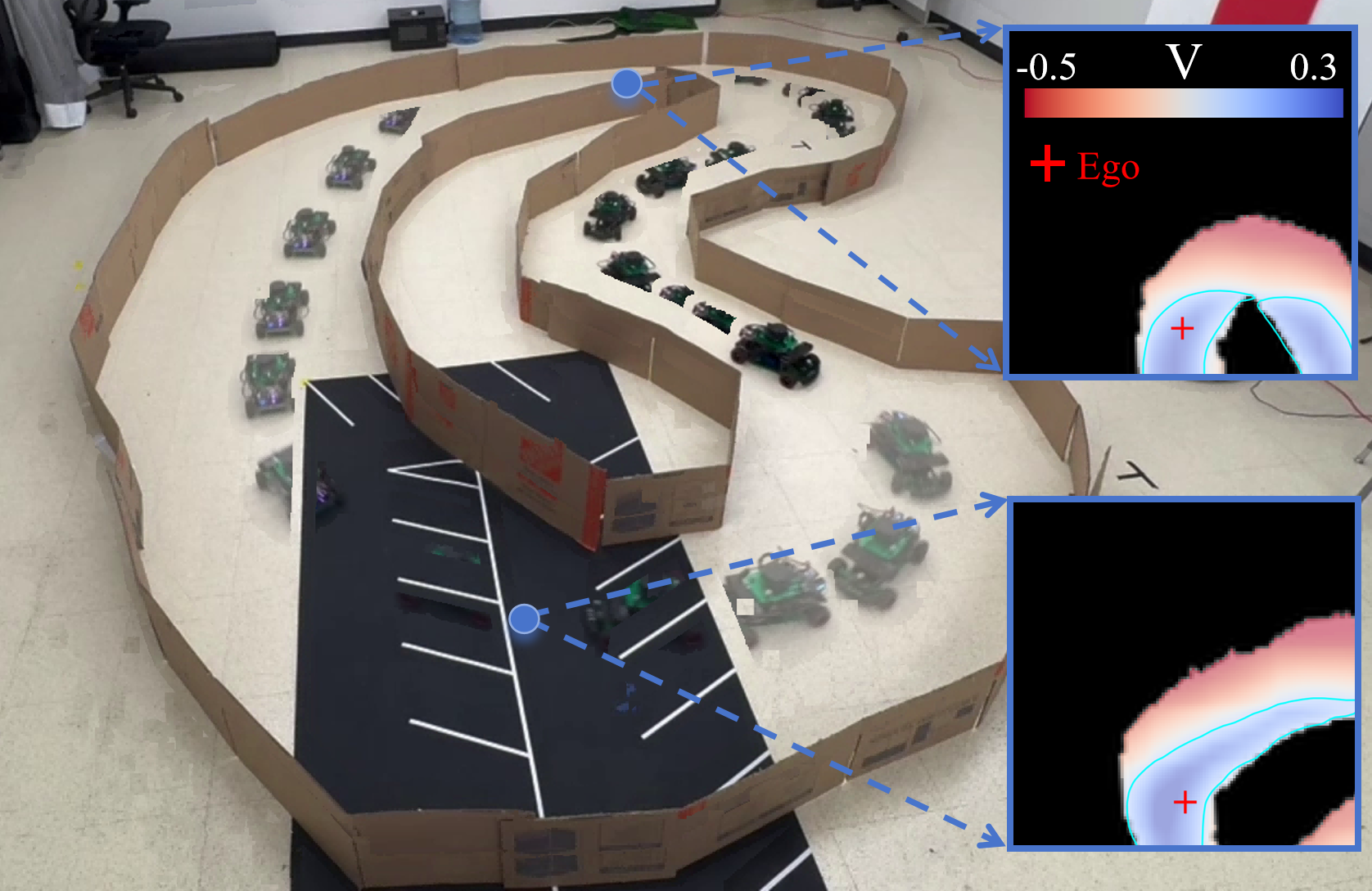}
  \caption{Real-world deployment on the Rosmaster R2 RC car, with value function slices visualized at critical states along the trajectory.}
  \label{fig:hardware}
  \vspace{-10pt} 
\end{wrapfigure}
We further finetune our value networks on 20 scaled-down tracks to match real-world dimensions and deploy them onto a physical Rosmaster R2 RC car on a previously unseen track. Following a low-speed warmup lap to construct a track map, we utilize Vicon motion capture for real-time localization, as the 7~Hz onboard LiDAR suffers from severe localization drift at higher speeds. All computations---including the nominal controller, BEV image generation, and safety filtering---execute on an offboard laptop with a GTX 1060 GPU and a 12-core 3.3~GHz CPU. The laptop streams linear and angular reference velocities to the car over WiFi. This closed-loop deployment introduces moderate sim-to-real challenges: a mismatch between the hardware and training control spaces, OOD track geometries, and approximately 50~ms of combined computation and communication latency. By applying a disturbance scale of 0.2, our adversarial formulation enables the safety filter to robustly absorb these modeling errors and discrepancies. Operating at speeds up to 1.5~m/s (hardware physical limit), the filtered policy completed 10 laps with zero collisions and an average speed of 1.07~m/s. In stark contrast, the unfiltered nominal policy, which blindly tracks the racing line with maximal speed, led to 3 collisions within its very first lap. More details are available in Appendix~\ref{app:hardware}.

%% file: conclusion.tex
\vspace{-0.5em}
\section{Conclusion}
\vspace{-0.5em}
\label{sec:conclusion}
In this work, we introduced a discrete-time, gradient-free neural framework for high-dimensional Hamilton-Jacobi (HJ) reachability. Our method partitions long-horizon problems into temporal windows, adapting a teacher-student actor-critic framework to jointly approximate optimal policies and value functions. By inheriting the backward temporal curriculum from PINNs, we preserve stable boundary anchoring while maintaining the flexibility of discrete-time RL. Across diverse safety-critical benchmarks, our framework excels at solving long-horizon and numerically stiff reachability problems. Moreover, it scales to massive state, control, and observation dimensions, and achieves robust sim-to-real generalization through closed-loop physical racing.

%% file: limitations.tex
\vspace{-0.5em}
\section{Limitations and Future Work}
\vspace{-0.5em}
\label{sec:limitations}
While our discrete-time framework scales robustly to high-dimensional systems and vision-based observations, several avenues for refinement remain. First, our method is sensitive to the integration time-step (see Appendix~\ref{app:dt_ablation}). Proper tuning is critical: excessively large steps introduce significant discretization error, while overly small steps shrink the directional value differences and induce biased policies. For simple reachability problems with closed-form dynamics, DeepReachMPC generally requires less tuning to provide strong baseline performance. Consequently, a valuable future direction is the empirical quantification of optimal $\Delta t$ bounds or the development of adaptive time-stepping during the temporal curriculum.

Second, because our bang-bang action probing assumes affine dynamics, the framework provides suboptimal solutions for non-affine black-box systems, where RL-based methods theoretically approximate optimal solutions. This bang-bang nature may also struggle with tasks requiring soft and smooth control interactions. An exciting future direction is to replace single-step bang-bang teachers with multi-step episodic rollouts to relax the affine dynamics requirement---at the cost of higher variance in the learning signal---enabling the framework to solve reachability problems that demand more nuanced control, such as humanoid balancing or intricate manipulation tasks.

%% file: appendix.tex
\newpage
\begin{appendices}
The appendix is organized as follows:
\begin{itemize}
    \item \textbf{Appendix A} provides detailed derivations for the Backward Reach-Avoid Tube (BRAT) formulation.
    \item \textbf{Appendix B} outlines extended implementation details.
    \item \textbf{Appendix C} presents a theoretical and empirical discussion on the discrete-time optimality gap of our framework.
    \item \textbf{Appendix D} contains extended ablation studies isolating the impact of our core algorithmic design choices.
    \item \textbf{Appendix E} expands on our empirical evaluation with additional benchmarking results on two-player zero-sum games.
    \item \textbf{Appendix F} details the benchmarking problem formulations, system dynamics, and baseline hyperparameter configurations.
    \item \textbf{Appendix G} describes the hardware specifications and deployment architecture for the physical RC car safety filter.
\end{itemize}
\section{BRAT Derivations and Descriptions}
\label{app:brat}

As stated in the main text, our proposed discrete-time neural reachability framework generalizes naturally to tasks that require simultaneous goal-reaching and safety tracking. In this appendix, we present the Backward Reach-Avoid Tube (BRAT) formulation in details, including the corresponding Bellman-Isaacs recursion operators, the TD target, and the rollout reachability anchoring labels utilized during fine-tuning phase.

\subsection{Reach-Avoid Value Function and Recursion}

Let the failure set be characterized by the zero sub-level set $\mathcal{L} = \{x \mid \ell(x) \leq 0\}$ and the target set by $\mathcal{G} = \{x \mid g(x) \leq 0\}$. The backward reach-avoid tube $\mathcal{B}_L(k)$ represents the set of all initial states at time step $k$ from which a controller can guarantee that the system enters the target set $\mathcal{G}$ within the finite horizon while strictly remaining outside the failure set $\mathcal{L}$ under worst-case disturbances:
\begin{equation}
\mathcal{B}_L(k)=\left\{x \;\middle|\; \forall d(\cdot), \, \exists u(\cdot), \, \exists \kappa \in[k, K] \text{ s.t. } x_{x, k}^{u, d}(\kappa) \in \mathcal{G}, \, \text{and } \forall s \in [k,\kappa], \, x_{x, k}^{u, d}(s) \notin \mathcal{L} \right\}.
\end{equation}

By adopting a minimization-maximization convention over the target distance and historical safety margins, the optimal finite-horizon BRAT value function $V_L^*(x, k)$ is characterized as:
\begin{equation}
V_L^*(x, k) = \min_{\mathbf{u}} \max_{\mathbf{d}} \left[ \min_{\kappa \in [k, K]} \left( \max \left( g(x_\kappa), \max_{s\in[k,\kappa]} -\ell(x_s) \right) \right) \right],
\end{equation}
where the underlying reach-avoid tube is recovered at the zero sub-level set $\mathcal{B}_L(k) = \{ x \mid V_L^*(x, k)\leq 0 \}$. The solution satisfies the following discrete-time Bellman-Isaacs recursion \cite{hsu2023isaacs}:
\begin{equation}
V_L(x, k) = \max \left( -\ell(x), \min \left( g(x), \min_{u \in \mathcal{U}} \max_{d \in \mathcal{D}} V_L(f(x, u, d), k+1) \right) \right),
\end{equation}
subject to the joint terminal boundary condition (BC):
\begin{equation}
V_L(x, K) = \max\big(g(x), -\ell(x)\big).
\end{equation}

\subsection{Policy Extraction and Teacher Adjustments}

To approximate the optimal minimax action pairs without explicit spatial gradients $\nabla_x V_\theta$, Our method simply adopts the same bang-bang probing strategy, except with mirrored signs due to the reversed min-max assignment of the control and disturbance. Specifically, 
\begin{equation}
\label{eq:ct_opt_policy}
\hat{u}_j^* =
\begin{cases}
\overline{u}_j, & C_{u,j} \leq 0, \\
\underline{u}_j, & C_{u,j} > 0,
\end{cases}
\qquad
\hat{d}_j^* =
\begin{cases}
\underline{d}_j, & C_{d,j} \leq 0, \\
\overline{d}_j, & C_{d,j} > 0, 
\end{cases}
\end{equation}
where the sign of $C$'s can be similarly estimated by the sign of $\Delta V_j^u := V_\theta(x^{\overline{u}_j}, k+1) - V_\theta(x^{\underline{u}_j}, k+1)$. The same procedure provides $\mathrm{sign}(C_{d,j}) \approx \mathrm{sign}(\Delta V_j^d)$. 
Although this formulation technically breaks the non-anticipative strategy assumption for the disturbance in a strict discrete-time setting, Appendix~\ref{app:zero_sum} empirically shows that for small $\Delta t$, our method remains on par with numerical solvers in two-player games.

\subsection{TD and Fine-Tuning Anchor Targets}
Let the one-step backward reach-avoid operator be defined as:
\begin{equation}
\mathcal{J}_{\text{BRAT}}(x, V) = \max\left(-\ell(x), \min\left(g(x), V\right)\right).
\end{equation}
For a sampled state-time pair $(x_0,k_0)$, the teacher TD target is given by $y_{\text{teacher}} = \mathcal{J}_{\text{BRAT}}\bigl(x_0, V_\theta(x_1, k_0 + 1)\bigr)$, where $x_1 = f(x_0,\hat{u}^*,\hat{d}^*)$. 

The $M$-step student target, $y_{\text{student}}$, evaluates the nested maximum reach-avoid cost under the learned student policies $\pi_\phi^{u,d}$, clipped by the bootstrapped value function at the final step:
\begin{equation}
y_{\text{student}} = \max \left[ \min_{\kappa \in [k_0, k_0+M]} \left( \max \left( g(x_\kappa), \max_{s\in[k_0,\kappa]} -\ell(x_s) \right) \right), \; V_\theta(x_M, k_0+M) \right].
\end{equation}

When the curriculum completes a temporal window and triggers the boundary-correction fine-tuning phase, we construct empirical anchor targets, $G^{\text{BRAT}}$. Instead of an $M$-step horizon, we evaluate the rollout until the current window's temporal boundary, $k_{\text{bound}}$, and bootstrap from the frozen downstream value function:
\begin{equation}
G^{\text{BRAT}} = \max \left[ \min_{\kappa \in [k_m, k_{\text{bound}}]} \left( \max \left( g(x_\kappa), \max_{s\in[k_m,\kappa]} -\ell(x_s) \right) \right), \; V_{\theta_{\text{frozen}}} \left( x_{k_{\text{bound}}}, k_{\text{bound}}\right) \right].
\end{equation}
To penalize over-optimistic value predictions (false positives), we adapt the asymmetric penalty $w_m$ in Eqn.~\eqref{eq:ft_loss} by mirroring the indicator conditions: $w_m = 1 + \lambda_{\text{fp}} \mathds{1}{V_\theta(x_m,k_m) \leq 0 \text{ and } G_m > 0}$.

Upon convergence of this objective, the network parameters are frozen to serve as a verified boundary condition for the preceding temporal segment.

\section{Additional Method Details}
This section presents supplementary methodological details that are omitted from the main text for brevity.
\subsection{Sub-Step Safety Evaluation and Tie-Breaking}

While our framework explicitly accounts for discrete-time safety, violations can still occur between discrete time steps. To mitigate this, one can roll out the transition using a finer time resolution to detect safety violations during these intermediate sub-steps.
To incorporate this sub-step safety verification into the probing teacher, we determine the control actions by evaluating the sign of $\Delta J_j^u$ instead of the standard value difference $\Delta V_j^u$ (which corresponds to the sign of $C_{u,j}$ in Eqn.~\ref{eq:ct_opt_policy}). Here, $J_j^u$ evaluates the worst-case safety margin during the transition, defined as the minimum between the safety function $\ell(x)$ evaluated at the intermediate transition states $x_{\text{sub}}$, and the network prediction at the subsequent discrete state $V_\theta(x^{u_j})$:
\[J_j^u = \min \left( \min_{x_{\text{sub}}} \ell(x_{\text{sub}}), V_\theta(x^{u_j}) \right).\]
While this formulation more strictly enforces safety during intermediate transitions, there are substantial regions of the state space where $\Delta J_j^u$ evaluates exactly to zero, because HJ reachability is inherently a sparse reward problem. For instance, $\Delta J_j^u$ will be the safety margin at the very first sub step when the robot is already moving safely away from the failure set. In these regions, the selection rule in Eqn.~\ref{eq:ct_opt_policy} blindly defaults to the lower bound $\underline{u}_j$. This introduces an artificial bias into the learned optimal policies, as the ideal behavior should instead seek to transition to a next state that maximizes the overall safety value.

To account for this, we introduce a value-based tie-breaking mechanism. When the coefficient is estimated as exactly zero, we explicitly evaluate the resulting next-state values to determine the optimal action.
Again, let $x^{\overline{u}_j}$ and $x^{\underline{u}_j}$ denote the next discrete-time states obtained by setting only the $j$-th control dimension to its upper or lower bound, respectively, while holding the remaining action dimensions fixed at their mean values. The updated probing teachers are as follows:
\begin{equation}
\label{eq:ct_opt_policy_with_tie_breaking}
\resizebox{\linewidth}{!}{$
\hat{u}_j^* =
\begin{cases}
\overline{u}_j, & \Delta J_j^u > 0, \\
\underline{u}_j, & \Delta J_j^u < 0, \\
\arg\max_{u_j \in \{\underline{u}_j, \overline{u}_j\}} V_\theta(x^{u_j}), & \Delta J_j^u = 0,
\end{cases}
\qquad
\hat{d}_j^* =
\begin{cases}
\underline{d}_j, & \Delta J_j^d > 0, \\
\overline{d}_j, & \Delta J_j^d < 0, \\
\arg\min_{d_j \in \{\underline{d}_j, \overline{d}_j\}} V_\theta(x^{d_j}), & \Delta J_j^d = 0.
\end{cases}
$}
\end{equation}

\subsection{Triple-Sided Probing}
While bang-bang control is theoretically optimal for continuous-time, control-disturbance-affine systems, it can introduce chattering behavior in discrete-time implementations. For instance, a quadrotor navigating along the centerline of a narrow gate might repeatedly oscillate between maximum and minimum thrust, resulting in a chattery and inefficient control profile.

To mitigate these discretization artifacts, we provide an alternative ``triple-sided'' probing strategy. We introduce a midpoint baseline action, defined as $u^{\text{mid}} = (\overline{u} + \underline{u}) / 2$. For each control dimension $j$, we independently evaluate the value network at the next states resulting from applying the upper bound ($x^{\overline{u}_j}$), the lower bound ($x^{\underline{u}_j}$), and the midpoint ($x^{u^{\text{mid}}_j}$). The optimal probing control $\hat{u}^*_j$ is then selected by maximizing the value function across these three discrete candidate actions:
$$\hat{u}^*_j = \arg\max_{u_j \in \{\underline{u}_j, u^{\text{mid}}_j, \overline{u}_j\}} V_\theta(x^{u_j}).$$

\subsection{Gamma Discounting}
Unlike standard RL-based reachability methods, our framework does not inherently require a discount factor ($\gamma < 1$) to induce a contraction mapping for the Bellman operator. Because we evaluate the problem within a bounded temporal curriculum, the exact, undiscounted optimal value propagates backward smoothly without diverging.

However, incorporating a discount factor $\gamma \in (0, 1]$ can still be advantageous for learning BRAT solutions in practice by providing a denser signal, incentivizing earlier goal reaching. We can seamlessly incorporate this temporal discount factor into our framework, resulting in discounted safety value solutions:
\begin{equation}
\label{eq:bellman_recursion_discounted}
\begin{aligned}
    V_S^*(x, k) &= \max_{\mathbf{u}} \min_{\mathbf{d}} \left[ \min_{\kappa \in [k, K]} \gamma^{\kappa - k} \ell(x_\kappa)  \right], \\
    V_L^*(x, k) &= \min_{\mathbf{u}} \max_{\mathbf{d}} \left[ \min_{\kappa \in [k, K]} \left( \max \left( \gamma^{\kappa - k} g(x_\kappa), \max_{s\in[k,\kappa]} -\gamma^{s - k} \ell(x_s) \right) \right) \right].
\end{aligned}
\end{equation}

\section{Discrete-Time Optimality Gap}
In general, the optimal control can be interior under DT affine dynamics, which means our method does not converge to the ground-truth reachability solutions even without learning errors. However, the Bellman-Isaacs update with bang-bang teacher control provides an approximation that asymptotically converges to the ground-truth value function as $\Delta t \to 0$. 
For single-player problems, this serves as a strictly conservative approximation. 
Additionally, the bang-bang teacher control is optimal when computing the Backward/Forward Reachable Sets (BRS/FRS) for DT affine systems. 
We will later present the convergence proof, BRS optimality proof, and a worst-case conservatism bound for single-player DT affine systems. 
In practice, cumulative conservatism usually remains much smaller than the worst-case bound because bang-bang policies could approximate time-averaged optimal controls via rapid switching.

\subsection{Empirical Evaluation: Impact of Time Discretization step $\Delta t$}
\label{app:dt_ablation}
We first empirically evaluate the impact of $\Delta t$ on the learned solutions' optimality using two representative robotic systems: a 3D Dubins car liveness problem and a two-link arm safety problem (Fig.~\ref{fig:coverage_table}). 
We use the policy safe set coverage as the metric---the ratio of actual safe states that are safe under our learned policy rollouts. 
Importantly, when the policy is trained using a time step of $\Delta t$, the trajectory rollout utilizes a control frequency of $\frac{1}{\Delta t}$ while the physical simulation step is performed with a fix, small time step to match the underlying continuous time dynamics for these two systems. 
Compared to high-precision continuous-time numerical solutions \cite{mitchell2004toolbox}, our learned DT policies maintain near-optimal safe set coverage, except at an aggressive $\Delta t = 0.5$~s. 
\begin{figure}[h]
  \centering
  \includegraphics[width=0.8\linewidth]{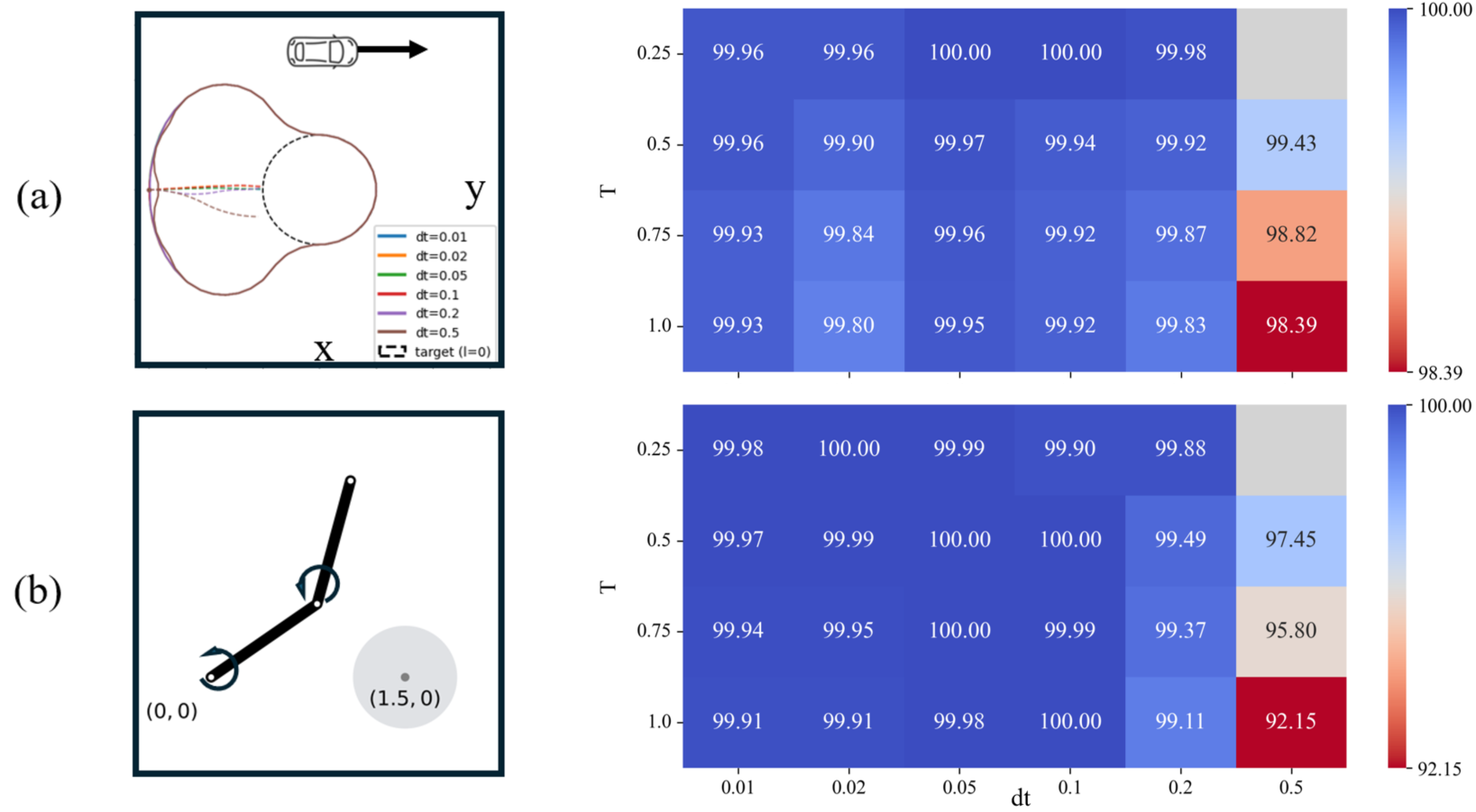}
  \caption{Left: (a) 3D Dubins problem setup with policy safe sets and rollouts ($T=1.0$~s, varying $\Delta t$), and (b) two-link arm problem setup. Right: Safe set coverage percentages across varying $\Delta t$ and $T$. \label{fig:coverage_table}}
  \vspace{-0.75em}
\end{figure}

We further ablate $\Delta t$ on the 10-D Narrow Passage benchmark, where numerical solutions are computationally intractable. The resulting metrics, compiled in Table~\ref{tab:dt_ablation}, track the empirical False Positive Rate (FPR), False Negative Rate (FNR), and total rollout success rate under identical network parameterizations. Similarly, performance is relatively insensitive to the choice of $\Delta t$ across a broad operating range. 
\begin{table}[h]
\centering
\caption{Ablation results isolating the impact of the time discretization step size $\Delta t$ on safety metrics and rollout performance over $20,000$ test states.}
\label{tab:dt_ablation}
\begin{tabular}{lcccc}
\toprule
\textbf{Discretization Step ($\Delta t$)} & \textbf{FPR \%} $\downarrow$ & \textbf{FNR \%} $\downarrow$ & \textbf{Succ \%} $\uparrow$ & $\textbf{Time (h)} \downarrow$\\
\midrule
$\Delta t = 0.025\text{s}$  & $100.0$ & $0.0$ & $82.01$ & $11$ \\
$\Delta t = 0.050\text{s}$ & $100.0$ & $0.0$ & $94.29$ & $5$\\
$\Delta t = 0.075\text{s}$ & $100.0$ & $0.0$ & $93.49$ & $4$\\
$\Delta t = 0.100\text{s}$  & $100.0$ & $0.0$ & $96.31$ & $3$\\
$\Delta t = 0.125\text{s}$  & $100.0$ & $0.0$ & $99.99$ & $2.5$\\
$\Delta t = 0.150\text{s}$  & $100.0$ & $0.0$ & $99.98$ & $1.5$\\
$\Delta t = 0.300\text{s}$  & $0.0$ & $100.0$ & $78.75$ & $1$\\
\bottomrule
\end{tabular}
\end{table}

As the Narrow Passage has a longer total horizon of 8 seconds, the tradeoff caused by $\Delta t$ can be better observed. An excessively large $\Delta t$ may introduce significant linearization errors, distorting the local approximation of the system physics and leading to degraded teacher action labels near complex boundary manifolds. Conversely, while an infinitesimally small $\Delta t$ theoretically recovers the true analytical directional derivative, it can practically bottleneck optimization by drastically expanding the number of discrete steps required to span the terminal horizon $K$. This temporal expansion increases the computational burden of multi-step student rollouts, increasing total training time, and renders the backward curriculum more vulnerable to bootstrapping error propagation across adjacent temporal windows. Furthermore, as $\Delta t$ diminishes, the probing teacher becomes highly vulnerable to approximation errors in the learned value function. 

Concretely, moderate discretization steps ($0.1$--$0.15$s) provide the best trade-off between computational efficiency and control performance, achieving the highest rollout success rates while substantially reducing training time. Only when the discretization becomes excessively coarse ($\Delta t=0.3$s) does performance degrade noticeably, indicating that the local dynamics approximation becomes insufficient for accurate value propagation.

\subsubsection{Continuum Limit and Exact Recovery via Bang-Bang Control}
We present here the proof that the conservativeness approaches $0$ when $\Delta t \rightarrow 0$. 
\begin{theorem}[Exact Continuum Recovery]
Consider a continuous-time nonlinear affine system $\dot{x} = f(x) + g(x)u$, where $f$ and $g$ are Lipschitz continuous, and the control is constrained to a convex polytope $u \in \mathcal{U}$. Let $\mathcal{V}$ be the set of extreme bang-bang vertices of $\mathcal{U}$. 

Let $x^*(t)$ be the optimal continuous-time trajectory generated by an arbitrary interior control signal $u^*(t) \in \mathcal{U}$ over a fixed horizon $T$. For any discrete time step $\Delta t > 0$, let $x^{BB}_k$ be the state of the discrete-time system generated by a strict bang-bang control sequence $v_k \in \mathcal{V}$. 

As the discretization step $\Delta t \to 0$, there exists a bang-bang sequence such that the discrete-time trajectory uniformly converges to the exact continuous-time optimal trajectory:
\begin{equation}
    \lim_{\Delta t \to 0} \max_{k \in [0, T/\Delta t]} \|x^{BB}_k - x^*(k\Delta t)\| = 0
\end{equation}
\end{theorem}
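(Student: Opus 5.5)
The plan is to build the bang-bang sequence explicitly, using \emph{sigma-delta} (error-diffusion) modulation of the convex weights of $u^*(t)$. This keeps the \emph{integrated} control error uniformly small. A Grönwall argument then transfers that bound to the state error. Throughout, take the discrete-time system to be forward Euler, $x^{BB}_{k+1}=x^{BB}_k+\Delta t\,(f(x^{BB}_k)+g(x^{BB}_k)v_k)$. This matches the paper's setting $f_0(x)=x+\Delta t f_{c,0}(x)$, $f_u=\Delta t f_{c,u}$.

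\textbf{Step 1 (trajectories stay bounded).} Since $f,g$ are Lipschitz and $\mathcal{U}$ is compact, the solution $x^*$ on $[0,T]$ lies in a compact set. A standard a priori bound shows that $x^{BB}_k$ stays in a fixed compact neighbourhood $\Omega$ for all small $\Delta t$. On $\Omega$, let $L$ bound the Lipschitz constants of $f,g$, and let $G$ bound $\|g\|$.

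\textbf{Step 2 (convex decomposition and modulation).} Write $\mathcal{V}=\{p_1,\dots,p_m\}$. By Carathéodory, $u^*(t)=\sum_i\lambda_i(t)p_i$ with $\lambda(t)$ in the simplex. I will choose $\lambda$ measurably, e.g.\ via a fixed triangulation of $\mathcal{U}$ and barycentric coordinates.

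Define the cell averages $\bar\lambda_{k}=\frac{1}{\Delta t}\int_{k\Delta t}^{(k+1)\Delta t}\lambda(s)\,ds$. Then run greedy error diffusion with accumulator $e_{k+1}=e_k+\bar\lambda_k-\mathbf{1}_{i_k}$, where $i_k=\arg\max_i (e_k+\bar\lambda_k)_i$ and $v_k=p_{i_k}$.

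A standard lemma for greedy simplex quantization is that $\|e_k\|_\infty\le C_m$ uniformly in $k$. The reason is that the components of $e$ sum to zero and the largest one is always decremented. Consequently, for every $k$,
\begin{equation}
\Big\|\Delta t\sum_{j<k}v_j-\int_0^{k\Delta t}u^*(s)\,ds\Big\|\le C_m\,\Delta t\,\max_i\|p_i\|.
\end{equation}
I expect proving this uniform accumulator bound cleanly to be the \emph{main obstacle}. If the greedy bound becomes messy, a fallback is to do each coordinate separately on the box $\mathcal{U}$: one-dimensional sigma-delta gives error at most $\Delta t(\overline u_j-\underline u_j)$ per coordinate, and a product of bang values is a vertex of the box. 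This avoids the multidimensional lemma.

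\textbf{Step 3 (Grönwall with summation by parts).} Let $\epsilon_k=x^{BB}_k-x^*(k\Delta t)$. Subtracting the integral equation from the Euler step gives
\[
\epsilon_{k+1}=\epsilon_k+\Delta t\,[\text{Lipschitz terms}]+O(\Delta t\,\omega(\Delta t))+g(x^*(k\Delta t))\Big(\Delta t\,v_k-\int_{k\Delta t}^{(k+1)\Delta t}u^*\Big).
\]
Here $\omega$ is the modulus of continuity of $x^*$, which is Lipschitz, so $\omega(\Delta t)=O(\Delta t)$. The Lipschitz terms are bounded by $L(1+\max\|p\|)\|\epsilon_k\|$.

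The last term is not small step by step; only its partial sums are small. I will handle it by Abel summation. Let $D_k=\Delta t\sum_{j<k}v_j-\int_0^{k\Delta t}u^*$. Then
\[
\sum_j g_j(D_{j+1}-D_j)=g_kD_{k+1}-\sum_j (g_{j}-g_{j-1})D_{j},
\]
and $\|g_j-g_{j-1}\|\le L\|x^*\|_{\mathrm{Lip}}\Delta t$. Step 2 gives $\|D_j\|=O(\Delta t)$, so this whole sum is $O(\Delta t)(1+T)$.

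Combining these, $\|\epsilon_k\|\le C\Delta t+\Delta t\,L'\sum_{j<k}\|\epsilon_j\|$. The discrete Grönwall inequality then yields $\max_k\|\epsilon_k\|\le C\Delta t\,e^{L'T}\to0$, which is the claimed uniform convergence.

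If $\lambda$ or $u^*$ is merely measurable, nothing changes, since only integrals of $u^*$ enter.
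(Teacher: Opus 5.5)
Your proof is correct, and it takes a genuinely different route from the paper's.

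\textbf{The paper's route.} It uses a two-scale block construction. It partitions $[0,T]$ into blocks of length $\tau$ and writes the block average of $u^*$, via Carath\'eodory, as a convex combination of at most $n_u+1$ vertices. Each vertex is held for $\lambda_i M$ micro-steps, so the frozen-$g$ part of the error cancels exactly at block ends. The residual from the variation of $g$ is bounded by $O(\tau^2)$ per block. Gr\"onwall is then applied with $\tau=\sqrt{\Delta t}$, giving an $O(\sqrt{\Delta t})$ rate.

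\textbf{Your route.} You work at the single scale $\Delta t$. Error diffusion keeps the running integrated control error $D_k$ at $O(\Delta t)$ at every grid point, not only at block ends. Summation by parts then does the job of the paper's per-block freezing of $g$, so you get $O(\Delta t)$.

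Your version also makes explicit two points the paper's sketch leaves implicit. First, $\lambda_i M$ is generally not an integer; naive rounding costs $O(\Delta t)$ per block, hence $O(\sqrt{\Delta t})$ in total, and this is the real reason the paper needs $M\to\infty$. Second, the paper compares against the exact flow under piecewise-constant bang-bang control rather than the Euler recursion, whereas you carry the $O(\Delta t^2)$ local truncation terms. A side benefit is that your $v_k$ depends on $u^*$ only up to time $(k+1)\Delta t$, while the block scheme needs the average of $u^*$ over the upcoming block.

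What the paper's route buys is elementary simplicity: only Carath\'eodory and exact average matching, no discrepancy lemma, and at most $n_u+1$ vertices per block. By contrast, the constant in your greedy bound grows with $|\mathcal{V}|$, which is $2^{n_u}$ for a box. It is still independent of $\Delta t$, and your coordinatewise fallback removes this dependence for boxes.

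\textbf{The accumulator bound.} The step you flagged as the main obstacle has a short proof.
A coordinate of $e$ never decreases unless it is selected. It is selected only when it is the largest entry of $e_k+\bar\lambda_k$, whose entries sum to $1$, so it is at least $1/m$ before the decrement and at least $1/m-1$ after. Starting from $e_0=0$, this gives $e_{k,i}\ge -1$ for all $k,i$. Then $\sum_i e_{k,i}=0$ gives $e_{k,i}\le m-1$.

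One small correction: since $D_k=-\Delta t\sum_i e_{k,i}p_i$, your displayed estimate really needs $\|e_k\|_1$ rather than $\|e_k\|_\infty$. The same argument gives $\|e_k\|_1\le 2(m-1)$, so only the constant changes.
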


\begin{proof}
The proof relies on approximating the continuous interior control integral with a high-frequency bang-bang sequence and bounding the trajectory divergence using Grönwall's Inequality.

\textbf{Step 1: Local Control Matching} \\
Partition the time horizon $T$ into intervals of duration $\tau$. Each interval $\tau$ contains $M$ discrete micro-steps of size $\Delta t$ (i.e., $\tau = M \Delta t$).

Over any interval $\tau$, we define the average continuous interior control:
\begin{equation}
    \bar{u}_\tau = \frac{1}{\tau} \int_0^\tau u^*(t) dt \quad \in \mathcal{U}
\end{equation}
By Carathéodory's Theorem, this interior point $\bar{u}_\tau$ can be written exactly as a convex combination of at most $n_u + 1$ bang-bang vertices: $\bar{u}_\tau = \sum_{i=1}^{n_u+1} \lambda_i v_i$, where $\sum \lambda_i = 1$.

We construct the discrete bang-bang sequence $v_k$ by applying each vertex $v_i$ for exactly $\lambda_i M$ steps. By design, the cumulative control effort over the interval $\tau$ perfectly matches the average interior control:
\begin{equation}
    \int_{t_i}^{t_{i+1}} v(s) ds = \int_{t_i}^{t_{i+1}} u^*(s) ds \label{eqn:end-point-equality}
\end{equation}
Furthermore, the maximum instantaneous pointwise divergence between the controls is bounded by the diameter of the control space: $\|v(s) - u^*(s)\| \le C_u$, where $C_u = 2 \max_{v \in \mathcal{V}} \|v\|$.

\textbf{Step 2: Bounding the State Divergence} \\
We evaluate the spatial divergence between the continuous trajectory $x^*(t)$ and the discrete bang-bang trajectory $x^{BB}(t)$ at an arbitrary time $t \in [0, T]$. Subtracting the system dynamics and applying the triangle inequality alongside the Lipschitz constant $L_f$ yields:
\begin{gather}
        \|x^{BB}(t) - x^*(t)\| \le \int_0^t L_f \|x^{BB}(s) - x^*(s)\| ds \nonumber \\
        + \left\| \int_0^t g(x^*(s)) \big(v(s) - u^*(s)\big) ds \right\|
\end{gather}

To evaluate the integration error, let $F_{max} = \max_{x, u} \|f(x) + g(x)u\|$ denote the maximum magnitude of the state derivative. We partition the integral into intervals of size $\tau$. Over a single complete interval $[t_i, t_{i+1}]$, we isolate the variation of $g(x)$ by centering it on the frozen matrix $g(x^*(t_i))$:
\begin{equation}
    E_i = \int_{t_i}^{t_{i+1}} \Big[ g(x^*(t_i)) + \big( g(x^*(s)) - g(x^*(t_i)) \big) \Big] \big(v(s) - u^*(s)\big) ds
\end{equation}

Because the unweighted control integrals match exactly over $\tau$, the frozen component evaluates to zero (\ref{eqn:end-point-equality}). The remaining matrix variation is bounded by the state drift ($F_{max}\tau$), the Lipschitz constant of the control matrix ($L_g$), and the maximum pointwise control difference ($C_u$):
\begin{equation}
    \|E_i\| \le \int_{t_i}^{t_{i+1}} (L_g F_{max} \tau) (C_u) ds = (L_g F_{max} C_u) \tau^2
\end{equation}

For any time $t$, the number of elapsed intervals is $t/\tau$. Summing the bound across these intervals yields the accumulated integration error up to time $t$, which is $(t L_g F_{max} C_u) \tau$. Substituting this directly into the divergence inequality gives:
\begin{equation}
    \|x^{BB}(t) - x^*(t)\| \le \int_0^t L_f \|x^{BB}(s) - x^*(s)\| ds + (t L_g F_{max} C_u) \tau
\end{equation}

Because the bounding term is monotonically increasing with $t$, we apply the integral form of Grönwall's Inequality to resolve an exponential upper bound on the spatial divergence at the final time $T$ (see the remark):
\begin{equation}
    \|x^{BB}(T) - x^*(T)\| \le (T L_g F_{max} C_u \tau) \cdot e^{L_f T}
\end{equation}

\textbf{Step 3: The Limit} \\
To complete the proof, we tie the macroscopic interval $\tau$ to the discrete step size $\Delta t$. Let $\tau = \sqrt{\Delta t}$. Thus, as $\Delta t \to 0$, the interval $\tau \to 0$, and the number of discrete micro-steps per interval $M = \tau/\Delta t = 1/\sqrt{\Delta t} \to \infty$. 

Substituting $\tau = \sqrt{\Delta t}$ into the Grönwall bound, and absorbing the time-invariant system constants ($T, L_g, F_{max}, C_u$) into the asymptotic notation, yields:
\begin{equation}
    \|x^{BB}(T) - x^*(T)\| \le \mathcal{O}(\sqrt{\Delta t}) \cdot e^{L_f T}
\end{equation}
Taking the limit as $\Delta t \to 0$, the spatial divergence strictly converges to zero. 

Therefore, the discrete bang-bang reachable set perfectly converges to the continuous-time interior reachable set, the exact optimal values are recovered without conservatism, and the proof is complete.

\begin{remark}[Application of Grönwall's Inequality]
The integral form of Grönwall's Inequality states that for a continuous non-negative function $u(t)$, if it satisfies the integral inequality 
\begin{equation*}
    u(t) \le \alpha(t) + \int_0^t \beta u(s) ds
\end{equation*}
where $\beta \ge 0$ is a constant and $\alpha(t)$ is a non-decreasing continuous function, then the function is bounded by $u(t) \le \alpha(t) e^{\beta t}$. 

In our formulation, the variables map exactly to this standard form:
\begin{itemize}
    \item The bounding target is the spatial divergence: $u(t) = \|x^{BB}(t) - x^*(t)\|$.
    \item The constant multiplier is the dynamics Lipschitz bound: $\beta = L_f$.
    \item The forcing term is the accumulated integration error: $\alpha(t) = (t L_g F_{max} C_u) \tau$.
\end{itemize}
Crucially, because all constants within $\alpha(t)$ are strictly positive, the accumulated error grows linearly with time $t$. This guarantees that the forcing term $\alpha(t)$ is monotonically non-decreasing, which satisfies the prerequisite condition for the clean exponential bound. Substituting our variables yields the continuous bound $\|x^{BB}(t) - x^*(t)\| \le (t L_g F_{max} C_u \tau) e^{L_f t}$, which we evaluate at the terminal horizon $t = T$.
\end{remark}
\end{proof}

\subsection{Teacher Optimality for Safety BRS}
We now show that a bang-bang control sequence exists as an optimal solution for computing the safety BRS for discrete-time linear systems.

\begin{theorem}[Optimality of Bang-Bang Control for Linear Safety BRS]
Consider a discrete-time linear affine system $x_{k+1} = A x_k + B u_k + c$, subject to control constraints $u_k \in \mathcal{U}$, where $\mathcal{U} \subset \mathbb{R}^{m}$ is a compact convex polytope. Let $\mathcal{V}$ denote the finite set of extreme vertices of $\mathcal{U}$. 

Assume the failure function $l(x)$ is convex, and consider the $K$-step safety BRS is characterized by the terminal-state failure function. There exists an optimal control sequence $\mathbf{u}^* = (u_0^*, u_1^*, \dots, u_{K-1}^*)$ that resides entirely within the bang-bang vertex set $\mathcal{V}^K$.
\end{theorem}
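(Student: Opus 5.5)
Since the problem is characterized by the terminal-state failure function, the objective reduces to optimizing $l(x_K)$ over admissible control sequences. In the safety convention the controller maximizes the margin, so the cost is $J(\mathbf{u}) = l(x_K(\mathbf{u}))$ to be maximized over $\mathbf{u}\in\mathcal{U}^K$. I will first unroll the linear dynamics:
\begin{equation*}
x_K = A^K x_0 + \sum_{j=0}^{K-1} A^{K-1-j}(B u_j + c),
\end{equation*}
so that $x_K$ is an affine function of the stacked control vector $\mathbf{u}=(u_0,\dots,u_{K-1})$.

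Because $l$ is convex and the map $\mathbf{u}\mapsto x_K(\mathbf{u})$ is affine, the composition $J$ is convex on $\mathbb{R}^{mK}$. The feasible set $\mathcal{U}^K$ is a Cartesian product of compact convex polytopes, so it is itself a compact convex polytope, and its extreme points are exactly the tuples of extreme points, namely $\mathcal{V}^K$.

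To see this product fact, suppose some component $u_j$ is not a vertex. Then $u_j=\tfrac12(a+b)$ with $a\neq b$ in $\mathcal{U}$, and perturbing only the $j$-th coordinate writes $\mathbf{u}$ as the midpoint of two distinct feasible points, so $\mathbf{u}$ is not extreme. Conversely, if $\mathbf{u}$ is a midpoint of two distinct feasible points, some coordinate differs, so that component is not extreme.

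The key step is the maximum principle for convex functions on polytopes. A continuous convex function on a compact convex set attains its maximum. By Krein–Milman, or more simply Minkowski's theorem in finite dimensions, any maximizer $\mathbf{u}^\star$ is a convex combination $\sum_i \lambda_i \mathbf{v}_i$ of points $\mathbf{v}_i\in\mathcal{V}^K$. Convexity then gives
\begin{equation*}
J(\mathbf{u}^\star)\le \sum_i\lambda_i J(\mathbf{v}_i)\le \max_i J(\mathbf{v}_i),
\end{equation*}
so some vertex sequence $\mathbf{v}_i$ attains the optimum, which is the claim. Continuity of $l$ follows from convexity on $\mathbb{R}^n$ (finite-valued convex functions are continuous), so existence of the maximum is guaranteed. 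Alternatively, I can avoid the existence question entirely, since $\mathcal{V}^K$ is finite and the displayed inequality shows $\sup_{\mathcal{U}^K}J=\max_{\mathcal{V}^K}J$.

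The main obstacle is the direction of optimization. The argument requires the controller to \emph{maximize} a convex function, which matches the BRT convention $\max_u$ of $\ell$ used in the paper. If instead one minimizes, convexity yields interior optima and the statement fails. I will therefore state explicitly that the safety BRS is $\{x_0 : \max_{\mathbf{u}} l(x_K)\le 0\}$ with the controller maximizing, and note that the statement fixes the terminal-state cost, not the running minimum $\min_\kappa \ell(x_\kappa)$. A pointwise minimum of convex functions is not convex, so the running-minimum case is excluded.
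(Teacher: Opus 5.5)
Your proof is correct and follows the paper's argument essentially step for step. You unroll the dynamics so that $x_K$ is affine in the stacked controls, note that $l\circ x_K$ is convex, identify $\mathrm{ext}(\mathcal{U}^K)=\mathcal{V}^K$, and conclude that a convex function's maximum over a compact polytope is attained at a vertex; the only differences are that you prove this last step via Minkowski's theorem and Jensen's inequality and verify the product extreme-point fact, where the paper simply invokes Bauer's maximum principle.
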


\begin{proof}
\textbf{Step 1: The Affine State Mapping} \\
For a linear affine system, the terminal state $x_K$ after $K$ steps can be expressed explicitly as an affine function of the initial state $x_0$ and the stacked control sequence $\mathbf{U} = (u_0, \dots, u_{K-1})$:
\begin{equation}
    x_K(\mathbf{U}) = A^K x_0 + \sum_{i=0}^{K-1} A^{K-1-i} c + \sum_{i=0}^{K-1} A^{K-1-i} B u_i
\end{equation}
Let $\mathcal{U}^K = \mathcal{U} \times \dots \times \mathcal{U}$ be the admissible space for the full control sequence. Because $\mathcal{U}$ is a compact convex polytope, the Cartesian product $\mathcal{U}^K$ is also a compact convex polytope. The extreme vertices of $\mathcal{U}^K$ are precisely the sequences composed exclusively of the bang-bang vertices: $\text{ext}(\mathcal{U}^K) = \mathcal{V}^K$.

\textbf{Step 2: Convexity of the Objective Function} \\
In safety BRS computation, the objective is to maximize the terminal safety function, defined by $V_K(x_0) = \max_{\mathbf{U} \in \mathcal{U}^K} l(x_K(\mathbf{U}))$. We define the composition function $\Phi(\mathbf{U}) = l(x_K(\mathbf{U}))$. 

Because $x_K(\mathbf{U})$ is an affine mapping with respect to $\mathbf{U}$, and $l(x)$ is assumed to be convex, their composition $\Phi(\mathbf{U})$ is also convex over $\mathcal{U}^K$.

\textbf{Step 3: Extreme Point Attainment} \\
The problem now reduces to maximizing the convex function $\Phi(\mathbf{U})$ over the compact convex polytope $\mathcal{U}^K$. By Bauer's Maximum Principle, the maximum of a convex function over a compact convex set is always attained at one (or more) of the set's extreme points. 

As established in Step 1, the extreme points of the sequence space $\mathcal{U}^K$ are precisely the sequences composed entirely of bang-bang vertices from $\mathcal{V}$. Therefore, there always exists an optimal control sequence $\mathbf{u}^*$ that resides entirely within the bang-bang vertex set $\mathcal{V}^K$, completing the proof.
\end{proof}

\begin{remark}[Scope of Bang-Bang Optimality]
This existence result fundamentally relies on the maximization of a convex objective over a polyhedral control sequence set. In linear DT \textit{liveness} formulations (target reaching), preventing target overshoot often necessitates interior control values rather than exclusively using boundary vertices. Furthermore, this optimality does not generally hold for nonlinear affine systems and BRT computation.
\end{remark}


\section{Ablations}
\subsection{Teacher-Mixing Fractions}
\label{app:teacher_mixing}
To evaluate the role of teacher-student mixing, we ablate the teacher injection fraction $\alpha$ within each temporal window, comparing student-only ($\alpha=0$), teacher-only ($\alpha=1.0$), balanced mixture ($\alpha=0.5$), and decaying ($1.0 \rightarrow 0.0$) schedules.

As shown in Table~\ref{tab:teacher_ablation}, the constant mixture achieves the smallest success-rate deviation from the numerical solution ($0.206\%$) while remaining competitive on both rollout cost metrics. Teacher-only supervision produces the lowest rollout-cost discrepancy as expected, but exhibits the largest success-rate gap, suggesting that excessive reliance on teacher targets may bias policy learning. Student-only and decaying schedules achieve intermediate performance across all metrics. Overall, these results indicate that maintaining a fixed level of teacher guidance provides the best balance between policy accuracy and rollout consistency.

Overall, the results suggest that performance is relatively robust to the choice of teacher schedule, although the constant mixture consistently provides the closest match to the numerical baseline in terms of success rate.

\begin{table}[h!]
\centering
\caption{Ablation of the teacher mixing fraction $\alpha$ on the Dubins3D benchmark.
Metrics report the absolute difference between our learned policy and the numerical solution over $20{,}000$ test states. $\Delta$Succ denotes the success-rate gap, while $\Delta$Cost Mean and $\Delta$Cost Std denote the differences in mean rollout cost and rollout-cost standard deviation, respectively. Lower values indicate closer agreement with the numerical baseline.}
\label{tab:teacher_ablation}
\resizebox{\textwidth}{!}{
\begin{tabular}{lccc}
\toprule
\textbf{Curriculum Schedule ($\alpha$)} & \textbf{$\Delta$ Succ \%} $\downarrow$ & \textbf{$\Delta$ Cost Mean} $\downarrow$ & \textbf{$\Delta$ Cost Std} $\downarrow$ \\
\midrule
Pure Student ($\alpha = 0.0$) & $0.2210\% \pm 0.0859\%$ & $0.0356 \pm 0.0015$ & $0.0772 \pm 0.0021$ \\
Pure Teacher ($\alpha = 1.0$) & $0.3362\% \pm 0.0608\%$ & $\mathbf{0.0274 \pm 0.0013}$ & $\mathbf{0.0643 \pm 0.0025}$ \\
Decaying Curriculum ($\alpha: 1.0 \rightarrow 0.0$) & $0.2267\% \pm 0.0607\%$ & $0.0299 \pm 0.0070$ & $0.0699 \pm 0.0061$ \\
Constant Mixture ($\alpha = 0.5$) & $\mathbf{0.2056\% \pm 0.0856\%}$ & $0.0312 \pm 0.0018$ & $0.0706 \pm 0.0026$ \\
\bottomrule
\end{tabular}
}
\end{table}

\subsection{Component-Wise Ablations}
Compared to our 99.65\% success rate on Narrow Passage, removing components led to significant performance degradation---in order of impact: replacing the windowed architecture with a monolithic network (9.32\%), omitting window fine-tuning (65.57\%), and removing asymmetric FP weighting (85.53\%). This highlights that windowing and finetuning are particularly critical components for long-horizon accuracy of reachable sets.


\section{Additional Benchmarking Results}

The following benchmarks are included to further evaluate the framework on adversarial zero-sum games beyond the primary experiments presented in the main text.

\subsection{Evaluation on Two-Player Zero-Sum Games}
\label{app:zero_sum}

\subsubsection{4D Human-Robot Collision Avoidance}
We first formulate a 4D human-robot collision avoidance scenario as a Worst-Case Backward Reachable Tube (BRT). In this game, an ego vehicle (modeled as a 4D Dubins car) attempts to avoid an adversarial human agent (modeled as a 3D Dubins vehicle). The system is modeled in a relative coordinate frame with the following dynamics:
\begin{equation*}
\begin{aligned}
\dot{x}_{\text{rel}} &= -v_{\text{ego}} + v_{\text{human}}\cos(\theta_{\text{rel}}) + \omega_{\text{ego}} y_{\text{rel}} \\
\dot{y}_{\text{rel}} &= v_{\text{human}}\sin(\theta_{\text{rel}}) - \omega_{\text{ego}} x_{\text{rel}} \\
\dot{\theta}_{\text{rel}} &= \omega_{\text{human}} - \omega_{\text{ego}} \\
\dot{v}_{\text{ego}} &= a_{\text{ego}}
\end{aligned}
\end{equation*}
The 4D state vector is defined as $x = [x_{\text{rel}}, y_{\text{rel}}, \theta_{\text{rel}}, v_{\text{ego}}]^\top$, where $x_{\text{rel}}$ and $y_{\text{rel}}$ denote the relative position of the human with respect to the ego vehicle, $\theta_{\text{rel}}$ is the relative heading, and $v_{\text{ego}}$ is the linear velocity of the ego vehicle. The state bounds are given as $[-3.0, 3.0]^2\times [-\pi, \pi] \times [0.1, 1.0]$. We use Euler integration to obtain the discrete-time dynamics.

The ego vehicle acts as the control player aiming to maximize safety, with control inputs $u = [\omega_{\text{ego}}, a_{\text{ego}}]^\top$ bounded by $\omega_{\text{ego}} \in [-1.0, 1.0]$ and $a_{\text{ego}} \in [-2.0, 2.0]$. The human acts as the disturbance player aiming to cause a collision, with a higher turn rate $d=\omega_{\text{human}} \in [-2.0, 2.0]$. The human is assumed to move at a constant linear velocity $v_{\text{human}} = 1.0$.
\begin{table}[htbp]
    \centering
    \caption{Quantitative safety evaluation across 20,000 random rollouts in the 4D collision avoidance environment.}
    \begin{tabular}{lcc}
        \toprule
        \textbf{Method} & \textbf{False Positives (FP)} $\downarrow$ & \textbf{False Negatives (FN)} $\downarrow$ \\
        \midrule
        OptimizedDP & 1569 & \textbf{10} \\
        \textbf{Ours} & \textbf{206} & 35 \\
        \bottomrule
    \end{tabular}
    \label{tab:quant_4d_game}
\end{table}

The safety set is defined by avoiding a collision ball of radius $r_{\text{goal}} = 0.5$ around the ego vehicle: $\ell(x) = \sqrt{x_{\text{rel}}^2 + y_{\text{rel}}^2} - r_{\text{goal}}$.

In this scenario, both players possess distinct advantages---the ego vehicle has greater longitudinal flexibility through variable speed, while the human has a higher maximum turn rate. This asymmetric control authority introduces sharp value gradients that cause numerical artifacts in the grid-based OptimizedDP solver \cite{bui2022optimizeddp}, as illustrated by the qualitative value and cost function heatmaps in Fig.~\ref{fig:rel4d_odp}. Quantitatively, the value-policy consistency is further evaluated with numbers of FP and FN among 20,000 trajectory rollouts, as shown in Table.~\ref{tab:quant_4d_game}. Our method yields a significantly smaller number of FP while being slightly more conservative with a higher number of FN.

\begin{figure}[htbp]
    \centering
    \begin{subfigure}{0.48\textwidth}
        \centering
        \includegraphics[width=\textwidth]{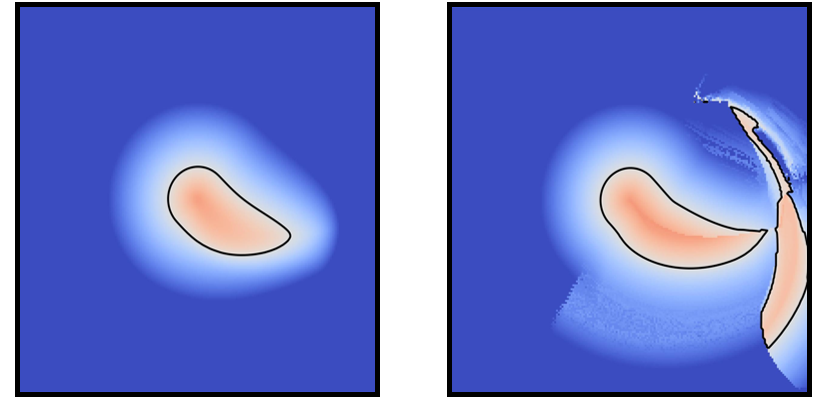}
        \caption{OptimizedDP solution}
        \label{fig:rel4d_odp}
    \end{subfigure}\hfill
    \begin{subfigure}{0.48\textwidth}
        \centering
        \includegraphics[width=\textwidth]{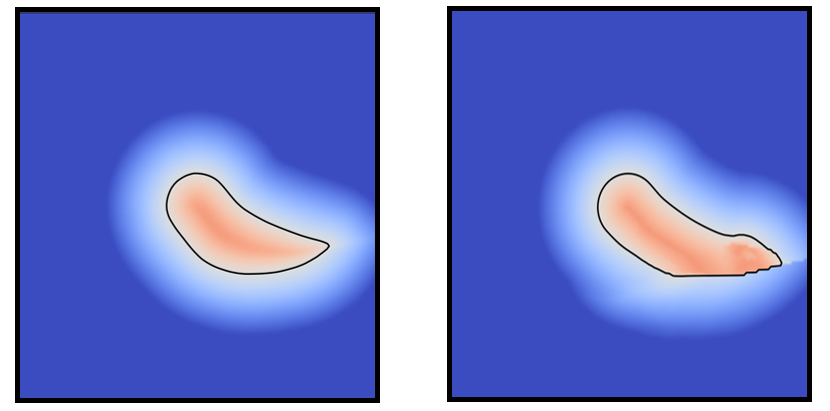}
        \caption{Our solution}
        \label{fig:rel4d_ours}
    \end{subfigure}\hfill
    \caption{Qualitative comparison of the 4D human-robot collision avoidance game. For both methods, we display the predicted value function (left sub-panel) alongside the reachability rollout cost evaluated from its induced policy (right sub-panel). The heatmap slices are taken at x-y plane where $\theta_{\text{rel}}=\pi/2$ and $v_{\text{ego}} =1.0$. The grid-based OptimizedDP solution exhibits severe numerical artifacts due to the asymmetric control authority between players. In contrast, our neural framework maintains a relatively more consistent safety boundary that closely matches the empirical rollout cost.}
    \label{fig:qualitative_4d_game}
\end{figure}

\subsubsection{10D Multi-Agent Pursuit-Evasion}
We scale our evaluation to a high-dimensional reach-avoid problem featuring one evader (modeled as a 4D Dubins car) and two cooperative pursuers (modeled as 3D Dubins cars). The game takes place in a bounded $10 \times 10$ field containing a central circular obstacle with a radius of $1$.

The 10D state vector concatenates the states of all three agents: $x = [x_e, y_e, \theta_e, v_e, x_{p_1}, y_{p_1}, \theta_{p_1}, x_{p_2}, y_{p_2}, \theta_{p_2}]^\top$. The spatial coordinates for all agents are bounded by the field size $x, y \in [-5.0, 5.0]$, headings by $\theta \in [-\pi, \pi]$, and the evader's velocity by $v_e \in [0.0, 2.0]$.
The two pursuers act as the joint control player (minimizer) aiming to capture the evader, with control inputs $u = [\omega_{p_1}, \omega_{p_2}]^\top$ bounded by $\omega_{p_i} \in [-1.0, 1.0]$. The evader acts as the disturbance player aiming to escape, with inputs $d = [\omega_e, a_e]^\top$ bounded by $\omega_e \in [-1.2, 1.2]$ and $a_e \in [-2.0, 2.0]$. Both pursuers move at a constant linear velocity $v_{\text{const}} = 1.0$.
Again, the problem is formulated with asymmetric control authority, leading to intriguing optimal strategies and numerical stiffness during training.

The continuous-time dynamics of the multi-agent system are given by
\begin{equation*}
\begin{aligned}
\dot{x}_e &= v_e\cos(\theta_e), \quad &\dot{y}_e &= v_e\sin(\theta_e), \quad &\dot{\theta}_e &= \omega_e, \quad &\dot{v}_e &= a_e \\
\dot{x}_{p_i} &= v_{\text{const}}\cos(\theta_{p_i}), \quad &\dot{y}_{p_i} &= v_{\text{const}}\sin(\theta_{p_i}), \quad &\dot{\theta}_{p_i} &= \omega_{p_i}, \quad &i &\in \{1, 2\}
\end{aligned}
\end{equation*}
and we apply Euler integration to obtain the discrete-time dynamics.

The target set is successfully reached if the evader is caught (comes within a capture radius of $r_{\text{catch}} = 0.75$ of any pursuer) or if the evader crashes into an obstacle/wall:
\[g(x) = \min \left( \min_{i \in \{1, 2\}} \left( \|p_e - p_{p_i}\| - r_{\text{catch}} \right), \text{SDF}(p_e) \right),\]
where $\text{SDF}(\cdot)$ denotes the signed distance function to the boundary and the central circular obstacle. The failure set is triggered if the pursuers fail the mission by colliding with the obstacles or each other:
\[\ell(x) = \min \left( \min_{i \in \{1, 2\}} \text{SDF}(p_{p_i}), \|p_{p_1} - p_{p_2}\| - 2r_{\text{robot}} \right).\]

\begin{table}[htbp]
    \centering
    \caption{Quantitative catching rates for the 10D pursuit-evasion game. We evaluate the performance of our method against the MPPI baseline by pairing different combinations of pursuer and evader policies.}
    \begin{tabular}{llcc}
        \toprule
        & & \multicolumn{2}{c}{\textbf{Evader Policy}} \\
        \cmidrule(lr){3-4}
        & & \textbf{MPPI} & \textbf{Ours} \\
        \midrule
        \multirow{2}{*}{\textbf{Pursuer Policy}} & \textbf{MPPI} & 36\% & 0\% \\
        & \textbf{Ours} & \textbf{88\%} & \textbf{42\%} \\
        \bottomrule
    \end{tabular}
    \label{tab:quant_10d_game}
\end{table}

To evaluate the optimality of our learned policies, we benchmark them against a Model Predictive Path Integral (MPPI) baseline \cite{7487277} in a cross-play evaluation. The MPPI pursuers operate independently, each individually minimizes their cumulative distance to the evader while incorporating an SDF penalty term for safety awareness. Conversely, the MPPI evader operates by simply maximizing its distance from the pursuers while avoiding obstacles.

Table~\ref{tab:quant_10d_game} presents the quantitative catching rates across all combinations of pursuer and evader policies. Our framework demonstrates dominant performance in both the evading and cooperative pursuing roles. When our learned evader is deployed against the baseline MPPI pursuers, it successfully escapes in 100\% of the 50 trials. Conversely, our joint pursuer policy captures the MPPI evader in 88\% of the rollouts, vastly outperforming the baseline MPPI pursuers, which only achieve a 36\% success rate against their own evader. Finally, when putting our learned pursuers against our learned evader, the catch rate settles at 42\%.

\subsection{Qualitative Analysis of Quadrotor Gate Avoidance}
\begin{figure}[htbp]
    \centering
    \begin{subfigure}{0.45\textwidth}
        \centering
        \includegraphics[width=\textwidth]{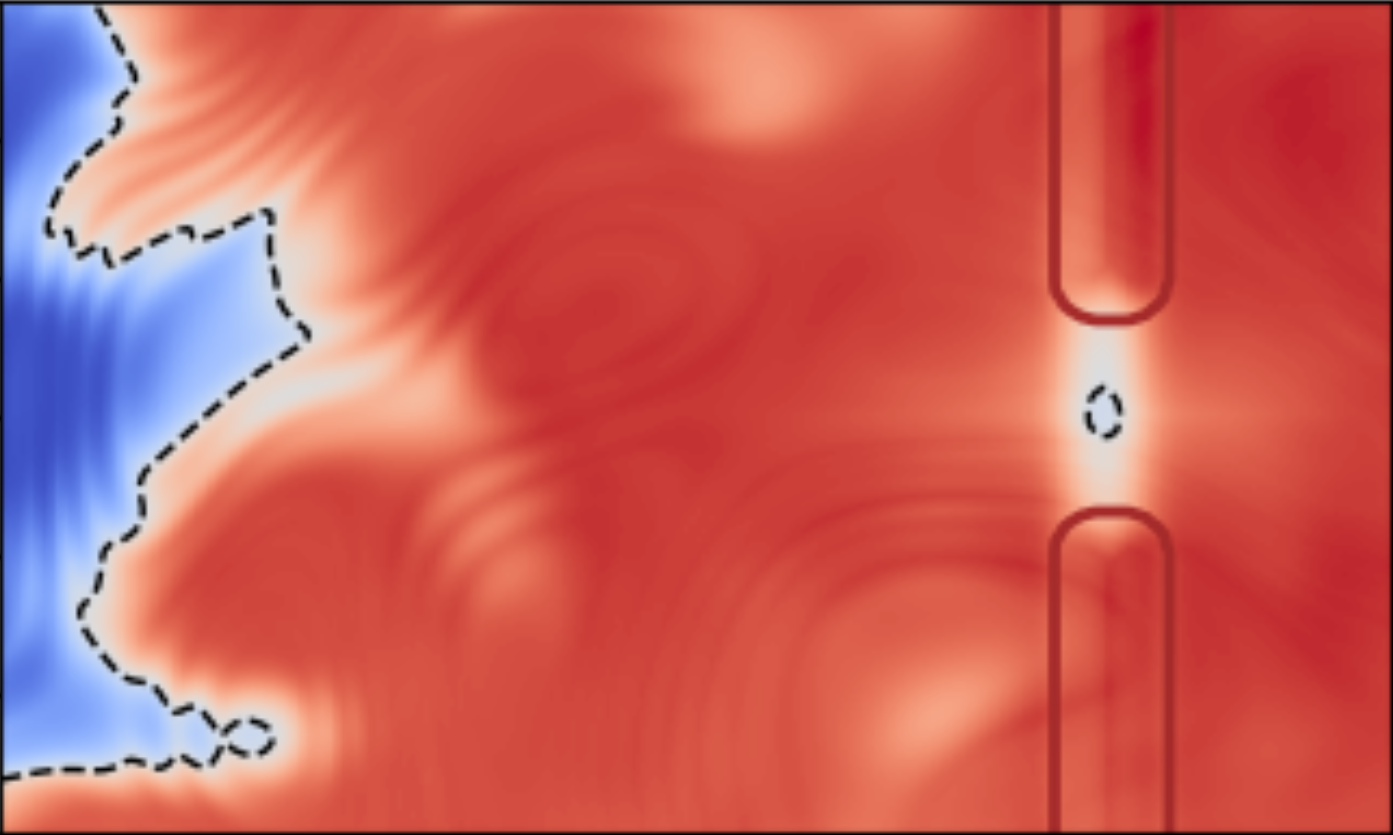}
        \caption{DeepReachMPC Solution (Sine NN)}
        \label{fig:qga_deepreach}
    \end{subfigure}
    \hfill
    \begin{subfigure}{0.45\textwidth}
        \centering
        \includegraphics[width=\textwidth]{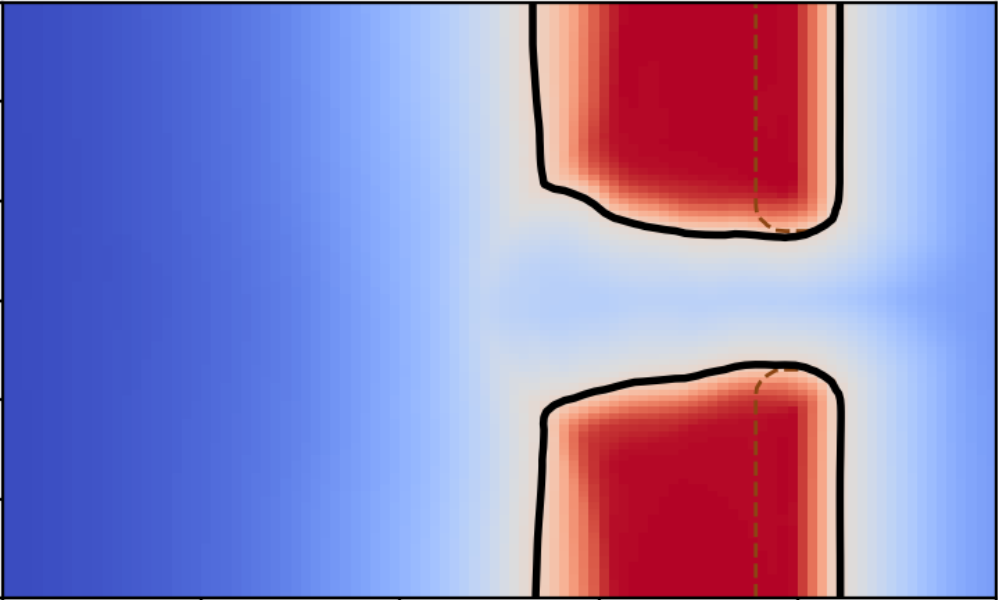}
        \caption{Ours}
        \label{fig:qga_ours}
    \end{subfigure}
    \caption{Qualitative value function heatmaps for the Quadrotor Gate Avoidance task. The slices visualize learned value function on the x-z plane with $[p_y, q_w, q_x, q_y, q_z, v_x, v_y, v_z, \omega_x, \omega_y, \omega_z]=[0, 1, 0, 0, 0, 2.5, 0, 0, 0, 0, 0]$. (a) The sinusoidal MLP induces severe rippling artifacts. In contrast, our discrete-time formulation yields.}
    \label{fig:qga_heatmaps}
\end{figure}

A primary motivation for adopting our discrete-time formulation is the flexibility to employ standard, highly stable neural network architectures (e.g., ReLU MLPs) to represent the value function. In contrast, PINN-based solvers rely heavily on sinusoidal MLPs to accurately represent the value function gradients required by the HJI PDE.
While the sinusoidal networks excel at handling differentiable boundary conditions, they often struggle to approximate complex non-smooth safety boundaries. 

As an example, the safety boundary in the Quadrotor Gate Avoidance problem is highly complex and non-smooth, defined by the SDF to the gate coupled with the velocity bounds. 
To demonstrate this advantage, we provide qualitative heatmap slices of the learned value functions in Fig.~\ref{fig:qga_heatmaps}. The PINN-based baseline (Fig.~\ref{fig:qga_deepreach}), constrained by its sinusoidal network, exhibits severe high-frequency oscillatory artifacts (ripples) and fails entirely to resolve the true value function near the gate. Conversely, our gradient-free framework (Fig.~\ref{fig:qga_ours}) stably propagates the safety boundary backward over the temporal curriculum.

\section{Environment Setup and Training Configurations}
\label{app:experimental_setup}

\subsection{Environment Definitions}

\subsubsection{Narrow Passage}

The Narrow Passage environment models two vehicles navigating in opposite directions along a shared road segment that contains a stationary obstacle.

\textbf{State and Action Space.}
The joint state is $x = [x_1, y_1, \theta_1, v_1, \phi_1,\; x_2, y_2, \theta_2, v_2, \phi_2] \in \mathbb{R}^{10}$, where $(x_i, y_i)$ is the position, $\theta_i \in [-\pi, \pi]$ is the heading, $v_i \in [0.1, 7.0]$\,m/s is the speed, and $\phi_i \in [-0.3\pi, 0.3\pi]$\,rad is the steering angle of vehicle $i \in \{1,2\}$.  The control input is $u = [a_1, \dot\phi_1, a_2, \dot\phi_2]$ with $|a_i| \leq 2.0$\,m/s$^2$ and $|\dot\phi_i| \leq 3\pi$\,rad/s.

\textbf{Dynamics.}
Each vehicle follows the kinematic bicycle model with wheelbase $L = 2.0$\,m:
\begin{align}
    \dot x_i &= v_i \cos\theta_i, &
    \dot y_i &= v_i \sin\theta_i, &
    \dot \theta_i &= \frac{v_i \tan\phi_i}{L}, &
    \dot v_i &= a_i, &
    \dot\phi_i &= \dot\phi_i^{\text{ctrl}}.
\end{align}

\textbf{Target and Obstacle Sets.}
Vehicle~1 must reach goal $g_1 = (6.0, -1.2)$\,m and vehicle~2 must reach $g_2 = (-6.0, 1.2)$\,m, both within distance $L$:
\begin{equation}
    \ell_{\text{target}}(x) = \max\!\left(\|[x_1,y_1] - g_1\| - L,\;\|[x_2,y_2] - g_2\| - L\right).
\end{equation}
The failure set encodes four constraint types, all of which must be satisfied:
\begin{itemize}
    \item \textbf{Road curbs:} $y_i \in [\text{curb}_\text{lo} + L/2,\; \text{curb}_\text{hi} - L/2]$ with $\text{curb}_\text{lo} = -2.8$\,m, $\text{curb}_\text{hi} = 2.8$\,m.
    \item \textbf{Lateral bounds:} $x_i \in [-10.0 + L/2,\; 10.0 - L/2]$\,m.
    \item \textbf{Stranded vehicle:} $\|[x_i, y_i] - p_{\text{obs}}\| \geq L$ with $p_{\text{obs}} = (0.0, -1.8)$\,m.
    \item \textbf{Mutual collision:} $\|[x_1,y_1] - [x_2,y_2]\| \geq L$.
\end{itemize}
The avoid function aggregates all constraints as $\ell_{\text{avoid}}(x) = -w \cdot \min_k d_k(x)$, where $\{d_k\}$ are the signed distances to each constraint boundary and $w = 10.0$.

\textbf{Neural Network Inputs.}
The 8 non-heading state components are normalized to $[-1, 1]$; the two headings $\theta_1, \theta_2$ are encoded as $(\sin\theta_i, \cos\theta_i)$, yielding a 12-dimensional input vector $\phi(x) \in \mathbb{R}^{12}$.

\subsubsection{Quadrotor Gate Avoidance}

This environment is formulated as a safety (BRT) problem: a quadrotor approaching a gate along the $+x$ direction must avoid crashing into the gate frame.

\textbf{State and Action Space.}
The state is $x = [p_x, p_y, p_z, q_w, q_x, q_y, q_z, v_x, v_y, v_z, \omega_x, \omega_y, \omega_z] \in \mathbb{R}^{13}$, where $(p_x,p_y,p_z)$ is position, $(q_w,q_x,q_y,q_z)$ is the quaternion, $(v_x,v_y,v_z)$ is linear velocity, and $(\omega_x,\omega_y,\omega_z)$ is angular velocity.  State bounds are $p_x \in [-4.0, 1.0]$\,m, $p_y, p_z \in [-1.5, 1.5]$\,m, and $\omega_i \in [-5.0, 5.0]$\,rad/s.  The 4D control input is $u = [\Delta T, \dot\omega_x, \dot\omega_y, \dot\omega_z]$, where $\Delta T \in [-9.8, 9.8]$\,N is a collective thrust perturbation around the hover thrust $T_h = 9.8$\,N, and $|\dot\omega_{x,y}| \leq 8.0$\,rad/s$^2$, $|\dot\omega_z| \leq 4.0$\,rad/s$^2$.

\textbf{Dynamics.}
The continuous-time equations of motion are:
\begin{align}
    \dot p &= v, \\
    \dot q &= \tfrac{1}{2}\,\Omega(q)\,\omega, \\
    \dot v_x &= 2(q_w q_y + q_x q_z)\,\frac{k T_c}{m}, \quad
    \dot v_y = 2(-q_w q_x + q_y q_z)\,\frac{k T_c}{m}, \quad
    \dot v_z = -g + (1 - 2q_x^2 - 2q_y^2)\,\frac{k T_c}{m}, \\
    \dot\omega_x &= \dot\omega_x^{\text{ctrl}} - \tfrac{5}{9}\,\omega_y\omega_z, \quad
    \dot\omega_y = \dot\omega_y^{\text{ctrl}} + \tfrac{5}{9}\,\omega_x\omega_z, \quad
    \dot\omega_z = \dot\omega_z^{\text{ctrl}},
\end{align}
where $\Omega(q)$ is the quaternion kinematic matrix, $T_c = \Delta T + T_h$, $m = 1.0$\,kg, $k = C_T / m = 1.0$, and $g = 9.8$\,m/s$^2$.

\textbf{Obstacle Set.}
The gate is an infinite wall at $x=0$ with a rectangular opening $|p_y| \leq 0.5$\,m, $|p_z| \leq 0.5$\,m and a width $0.1$\,m.  The quadrotor body is a ball of radius $r = 0.15$\,m.  The unsafe set combines gate collision with velocity bound violations: $v_x \in [0.5, 4.5]$\,m/s, $v_y, v_z \in [-2.0, 2.0]$\,m/s, and $\omega_i \in [-4.5, 4.5]$\,rad/s.  A tanh-scaled signed distance is used to sharpen the gate boundary.

\textbf{Neural Network Inputs.}
All 13 state dimensions are used; position and velocity components are linearly normalized to $[-1,1]$ while the quaternion is kept as a raw unit vector, yielding $\phi(x) \in \mathbb{R}^{13}$.

\subsubsection{Publisher-Subscriber System ($N$D)}

This benchmark scales our method to 40D and 80D by instantiating a weakly nonlinear system with $N \in \{40, 80\}$ states.

\paragraph{State and Action Space.}
$x \in [-1, 1]^N$, control $u \in [-u_{\max}, u_{\max}]^{N-1}$ with $u_{\max} = 0.5$.  There is no adversarial disturbance.

\paragraph{Dynamics.}
\begin{equation}
    \dot x = A x + B u + \eta(x),
\end{equation}
where $A = -0.5\,I_N - e_0 \mathbf{1}_{1:N-1}^\top$, $B = [0;\; 0.4\,I_{N-1}]$, and the nonlinear term is
\begin{equation}
    \eta_0(x) = \mu\sin(\alpha x_0)\,x_0^2, \qquad
    \eta_{i>0}(x) = -\gamma\,x_0^2\,x_i,
\end{equation}
with $\gamma = 20.0$ and $\mu = \alpha = 0$. More details are available in \cite{sharpless2024linear}.

\paragraph{Target Set.}
A scaled ellipsoid in $\mathbb{R}^N$:
\begin{equation}
    \ell(x) = \tfrac{1}{2}\!\left(\|\mathrm{diag}(\varepsilon)\,x\|^2 - R^2\right) \leq 0,
    \quad \varepsilon = [\sqrt{N-1},\,1,\ldots,1]^\top,\quad R = \sqrt{N-1}\cdot 0.25.
\end{equation}

\paragraph{Neural Network Inputs.}
The state is already in $[-1,1]^N$ and is passed directly: $\phi(x) = x \in \mathbb{R}^N$.

\subsubsection{Dubins Car (3D)}
The Dubins car is a planar reach-avoid problem: a unicycle system with a fixed speed must reach a target region while avoiding circular obstacles.

\textbf{State and Action Space.}
The state is $x = [p_x, p_y, \theta] \in [-5.0, 5.0]^2 \times [-\pi, \pi]$, where $(p_x, p_y)$ is the planar position and $\theta$ is the heading. The scalar control is the turning rate $\omega \in [-\omega_{\max}, \omega_{\max}]$ with $\omega_{\max} = 1.0$\,rad/s. Forward speed is fixed at $v = 1.0$\,m/s.

\textbf{Dynamics.}
\begin{equation}
    \dot p_x = v\cos\theta, \qquad \dot p_y = v\sin\theta, \qquad \dot\theta = \omega.
\end{equation}

\textbf{Target and Obstacle Sets.}
The target is a disk of radius $r_{\text{target}} = 0.5$\,m centred at the
origin:
\begin{equation}
    \ell_{\text{target}}(x) = \|[p_x, p_y]\| - r_{\text{target}}.
\end{equation}

The environment contains $N_{\text{obs}} = 5$ circular obstacles with centres $c_i \in \mathbb{R}^2$ and radii $r_i$, given in Table~\ref{tab:dubins_obstacles}. The signed avoid function takes the value of the most dangerous obstacle at each state:
\begin{equation}
    \ell_{\text{avoid}}(x) =
        \max_{i=1}^{N_{\text{obs}}}
        \bigl(r_i - \|[p_x, p_y] - c_i\|\bigr).
\end{equation}
A state is unsafe if $\ell_{\text{avoid}}(x) > 0$, i.e.\ the vehicle is inside at least one obstacle.

\begin{table}[h!]
\centering
\caption{Default obstacle parameters for the Dubins car environment.}
\label{tab:dubins_obstacles}
\begin{tabular}{cccc}
\toprule
$i$ & $c_i^x$ & $c_i^y$ & $r_i$ \\
\midrule
1 & $-0.70$ & $\phantom{-}0.20$ & $0.30$ \\
2 & $\phantom{-}1.20$ & $-1.50$ & $0.35$ \\
3 & $\phantom{-}1.80$ & $\phantom{-}1.00$ & $0.50$ \\
4 & $-2.00$ & $\phantom{-}1.50$ & $0.40$ \\
5 & $-1.50$ & $-2.00$ & $0.25$ \\
\bottomrule
\end{tabular}
\end{table}

\textbf{Neural Network Inputs.}
Position is linearly normalized to $[-1,1]^2$ and the heading is encoded as $(\sin\theta, \cos\theta)$, yielding $\phi(x) \in \mathbb{R}^4$.

\subsection{Network Architectures}
\label{app:architectures}

All experiments employ a window-partitioned architecture consisting of independent value and policy networks for each temporal window. The value network estimates the reachability value function, while the policy network jointly parameterizes both control and disturbance actions through a shared latent representation. For a horizon partitioned into $M$ windows, the framework instantiates $M$ value networks and $M$ policy networks, each responsible for a local temporal segment.

\textbf{Windowed Value Network.}
Each temporal window is represented by a time-conditioned residual MLP that approximates the local value function $V(x,k)$. The architecture consists of:

\begin{enumerate}
    \item \textbf{Input stem:}
    \begin{equation*}
        [k,\phi(x)]
        \rightarrow
        \mathrm{Linear}(1+\phi_{\mathrm{dim}},W_V)^{L_V}
        \rightarrow
        \mathrm{ReLU},
    \end{equation*}
    where $\phi(x)$ denotes the state encoding.
    \item \textbf{Residual trunk:}
    $L$ residual blocks, each consisting of a two-layer MLP with a skip connection.
    \item \textbf{Output head:}
    \begin{equation*}
        \mathrm{Linear}(W_V,1),
    \end{equation*}
    producing the scalar value estimate $V_\theta(x,k)$.
\end{enumerate}

Unless otherwise specified, all value networks use width $W_V=512$, depth $L_V=8$, and are optimized using AdamW with learning rate $3\times10^{-4}$.

\textbf{Windowed Vector-Quantized Policy Network.}
Each temporal window is associated with a policy network that jointly predicts control and disturbance actions. The network maps states into a latent action embedding which is quantized onto the vertices of the bang-bang action set.

\begin{enumerate}
    \item \textbf{Encoder:}
    \begin{equation*}
    [k,\phi(x)]
    \rightarrow
    (\mathrm{Linear}+\mathrm{ReLU})^{L_\pi}
    \rightarrow
    \mathrm{Linear}(W_\pi,d_u+d_d),
    \end{equation*}
    producing a continuous embedding
    \begin{equation*}
    z_e \in \mathbb{R}^{d_u+d_d}.
    \end{equation*}
    \item \textbf{Vector quantization:}
    The embedding is projected onto a fixed (implicit) codebook containing all vertices of the admissible bang-bang action set. The nearest codebook entry
    \begin{equation*}
    z_q=\arg\min_{c\in\mathcal{C}}
    \|z_e-c\|_2
    \end{equation*}
    is selected and returned through a straight-through estimator (STE), allowing gradients to propagate through the continuous embedding during training.
    \item \textbf{Action decoding:}
    The quantized embedding is interpreted as the joint control-disturbance action vector and scaled to the corresponding control and disturbance limits during rollout execution.
\end{enumerate}

Unless otherwise specified, all policy networks use width $W_\pi=512$, depth $L_\pi=5$, and are optimized using AdamW with learning rate $3\times10^{-4}$.

\begin{table}[h!]
\centering
\caption{Default architecture and training hyperparameters used across all experiments.}
\label{tab:default_hparams}
\begin{tabular}{lc}
\toprule
\textbf{Hyperparameter} & \textbf{Value} \\
\midrule
\multicolumn{2}{c}{\textit{Architecture}} \\
\midrule
Value network width $W$ & 512 \\
Value network depth $L$ & 8 \\
Policy network width $W_\pi$ & 512 \\
Policy network depth $L_\pi$ & 5 \\
\midrule
\multicolumn{2}{c}{\textit{Optimization}} \\
\midrule
Optimizer & AdamW \\
Learning rate & $3\times10^{-4}$ \\
Batch size & 4096 \\
\midrule
\multicolumn{2}{c}{\textit{Value Learning}} \\
\midrule
TD($\lambda$) coefficient & 0.5 \\
Maximum rollout length $M$ & 5 \\
Discount factor $\gamma$ & 1.0 \\
Teacher mixing fraction $\alpha$ & 0.5 \\
Boundary-correction LR & $5\times10^{-5}$ \\
Anchor loss weight $\lambda_{\mathrm{anchor}}$ & 5.0 \\
\bottomrule
\end{tabular}
\end{table}

\subsection{Training Configuration}

Unless otherwise specified, all experiments use the following training configuration.

\textbf{Optimization.}
Value and policy networks are optimized using AdamW with a learning rate of $3\times10^{-4}$ and batch size 4096. Policy and value updates are alternated throughout training, with each phase terminated according to a Bellman-error convergence criterion.

\textbf{Rollout Targets.}
Value targets are generated using a TD($\lambda$) mixture with $\lambda=0.5$ and a maximum rollout horizon of $M=5$ steps. All experiments use $\gamma=1.0$.

\textbf{State Sampling.}
Training batches combine multiple sampling strategies to improve boundary resolution. Unless otherwise specified, 30\% of samples are drawn uniformly from the state space, 15\% near target boundaries, 15\% near obstacle boundaries, and 40\% near the current reachability boundary. Boundary samples are generated using a boundary-aware replay buffer with band half-width 0.1 and capacity 4096.

\textbf{Temporal Curriculum.}
Training proceeds backward from the terminal boundary condition using the windowed temporal curriculum described in Sec.~\ref{sec:method}. Each temporal window is initialized from the subsequent converged window and optimized independently. Before freezing a window, a boundary-correction phase is performed using rollout-derived targets generated by the learned student policies. Boundary correction uses a reduced learning rate of $5\times10^{-5}$ and an anchor loss weight of $\lambda_{\mathrm{anchor}}=5.0$.

\textbf{Policy Learning.}
Policies are trained through supervised imitation of teacher actions obtained via gradient-free bang-bang value probing. During rollout target generation, teacher and student actions are mixed with a fixed teacher injection fraction of $\alpha=0.5$.

\begin{table}[h!]
\centering
\caption{Benchmark-specific configurations. $d_x$: state dimension; $d_\phi$: NN input dimension excluding the time input (after encoding); $d_u$/$d_d$: control/disturbance dimensions.}
\label{tab:benchmark_configs}
\begin{tabular}{lccccccc}
\toprule
\textbf{Benchmark} & \textbf{Horizon (s)} & \textbf{Windows} & $\mathbf{\Delta t}$ & $\mathbf{d_x}$ & $\mathbf{d_\phi}$ & $\mathbf{d_u}$ & $\mathbf{d_d}$ \\
\midrule
Dubins Car        & $4.0$ & $4$ & $0.02$ & $3$ & $4$ & $1$ & $0$ \\
Narrow Passage    & $8.0$ & $16$ & $0.05$ & $10$ & $12$ & $4$ & $0$ \\
Quadrotor         & $2.0$ & $8$ & $0.01$ & $13$ & $13$ & $4$ & $0$ \\
Pursuit-Evasion   & $8.0$ & $8$ & $0.05$ & $10$ & $13$ & $2$ & $2$ \\
40D Network       & $1.0$ & $2$ & $0.01$ & $40$ & $40$ & $39$ & $0$ \\
80D Network       & $1.0$ & $2$ & $0.01$ & $80$ & $80$ & $79$ & $0$ \\
\bottomrule
\end{tabular}
\end{table}

\section{Details on Hardware Setup and Experiments}
\label{app:hardware}

This section provides additional details for the race car BRT training and safety filtering.
\subsection{Vehicle Dynamics and Problem Setup}
We adopt the F1Tenth hybrid dynamics from the F1Tenth simulator \cite{pmlr-v123-o-kelly20a}. To train a robust safety value function capable of handling varying levels of real-world uncertainty, we augment the state space with a disturbance scale parameter, $d_s \in [0, 0.3]$.

The state is then given by $x=(p_x, p_y, \phi_s , v , \theta_{yaw} , \omega_{yaw} , \beta_{slip}, d_s)$ where $\phi_s \in [-0.4189, 0.4189]$, $v \in [0.5, 10]$, $\omega_{yaw} \in  [-5, 5]$, and $\beta_{slip} \in [-0.8, 0.8]$. Here, $\phi_s$ denotes the steering angle, $v$ is the longitudinal velocity, $\theta_{yaw}$ is the yaw angle, $\omega_{yaw}$ is the yawing rate, and $\beta_{slip}$ denotes the slip angle. The control inputs are the rate of steering angle and the longitudinal acceleration, denoted as $u=[\dot{\phi}, a] \in [-3.2,3.2] \times [-9.51, 9.51]$. The disturbances include lateral drift scaler $v_l \in [-1,1]$ and control authority deduction $d_\phi$ and $d_a$, whose bounds are scaled by an extra conditional dimension $d_s$. The longitudinal acceleration is further capped when the current velocity $v$ is high, and the system switches to kinematic mode when the velocity is too low. 
We increase the switching speed for the dynamic mode to $1.5$~m/s to match the speed limit of the hardware RC car. Namely, the F1Tenth car is in its kinematic mode if the speed is less than 1.5. Otherwise, the dynamic mode will be used. We now present the continuous-time dynamics for each mode. Euler integration is employed to provide discrete-time dynamics.

The dynamic-mode dynamics are:
\[
\mathbf{f} = \begin{bmatrix}
v \cos(\theta_{yaw} + \beta_{slip}) - v_l d_s (|v|+1) \sin(\theta_{yaw}) \\
v \sin(\theta_{yaw} + \beta_{slip}) + v_l d_s (|v|+1) \cos(\theta_{yaw})\\
\dot{\phi}- d_\phi d_s \\
a - d_a d_s\\
\omega_{yaw} \\
-\frac{\mu m}{v I (l_r + l_f)} \left( l_f^2 C_{Sf} (g l_r - a h) + l_r^2 C_{Sr} (g l_f + a h) \right) \omega_{yaw} \\
+ \frac{\mu m}{I (l_r + l_f)} \left( l_r C_{Sr} (g l_f + a h) - l_f C_{Sf} (g l_r - a h) \right) \beta_{slip} \\
+ \frac{\mu m}{I (l_r + l_f)} l_f C_{Sf} (g l_r - a h) \phi_s, \\
\left( \frac{\mu}{v^2 (l_r + l_f)} \left( C_{Sr} (g l_f + a h) l_r - C_{Sf} (g l_r - a h) l_f \right) - 1 \right) \omega_{yaw} \\
- \frac{\mu}{v (l_r + l_f)} \left( C_{Sr} (g l_f + a h) + C_{Sf} (g l_r - a h) \right) \beta_{slip} \\
+ \frac{\mu}{v (l_r + l_f)} C_{Sf} (g l_r - a h) \phi_s
\end{bmatrix}.
\]
where the vehicle parameters are borrowed directly from the F1Tenth simulator:
\begin{itemize}
    \item \( \mu \): Surface friction coefficient: \(1.0489\)
    \item \( C_{Sf} \): Cornering stiffness coefficient, front: \( 4.718 /\, \text{rad} \)
    \item \( C_{Sr} \): Cornering stiffness coefficient, rear: \( 5.4562 /\, \text{rad} \)
    \item \( l_f \): Distance from center of gravity to front axle: \( 0.15875 \, \text{m} \)
    \item \( l_r \): Distance from center of gravity to rear axle: \( 0.17145 \, \text{m} \)
    \item \( h \): Height of center of gravity: \( 0.074 \, \text{m} \)
    \item \( m \): Total mass of the vehicle: \( 3.74 \, \text{kg} \)
    \item \( I \): Moment of inertia of the entire vehicle about the \( z \)-axis: \( 0.04712 \, \text{kg} \cdot \text{m}^2 \)
\end{itemize}

The kinematic-mode dynamics are:
\[
\mathbf{f} = \begin{bmatrix}
v \cos(\theta_{yaw}) - v_l d_s (|v|+1) \sin(\theta_{yaw})\\
v \sin(\theta_{yaw}) + v_l d_s (|v|+1) \cos(\theta_{yaw})\\
\dot{\phi} - d_\phi d_s\\
a - d_a d_s\\
\frac{v}{l_r + l_f} \tan(\phi_s) \\
\frac{a}{l_r + l_f} \tan(\phi_s) + \frac{v}{(l_r + l_f) \cos^2(\phi_s)} \dot{\phi} \\
0
\end{bmatrix}.
\]

\subsection{Network Architecture and Inputs}
To effectively process high-dimensional observations, our neural network architectures decouple visual feature extraction from the other input features. The network inputs consist of a $128 \times 128$ ego-centric BEV occupancy map, alongside a 5-dimensional proprioceptive state vector $(\phi, v, \omega_{yaw}, \beta_{slip}, d_s)$ and the time $t$, forming a $16,390$-dimensional input space.

To process the visual input, we employ a CNN encoder, which consists of five convolutional layers, each utilizing a $3 \times 3$ kernel, a stride of 2, a padding of 1, and ReLU activations. This architecture progressively downsamples the $1 \times 128 \times 128$ spatial occupancy map into a $128 \times 4 \times 4$ feature tensor. The tensor is subsequently flattened and passed through a linear projection layer to yield a dense, $256$-dimensional embedding. 

The $256$-dimensional BEV embedding is then concatenated with the proprioceptive states and the time variable. This joint feature vector is passed into a ResNet with 8 residual blocks, each maintaining the 512-neuron width with ReLU activations, to predict the value function or optimal actions.
Importantly, the entire architecture is trained end-to-end from scratch, including the BEV encoder.

\subsection{Discrete-Time CBF Filter}
To ensure safety during online deployment, we gaurd the task-driven MPPI nominal control, $u_{\text{nom}}$, using a sampling-based variant of the DCBF filter \cite{agrawal2017discrete}. 

First, we define the calibrated safety value function as:
\begin{equation}
    V_{\text{cal}}(x) = V_\theta(x,0) - \delta
\end{equation}
where $V_\theta(x,0)$ is the raw output of the learned value network and $\delta$ is the calibration threshold obtained following \cite{lin2024verification}. To maintain safety, the DCBF strictly bounds the decaying rate of the safety value based on the current value. We define this required minimum safety value for the next step as:
\begin{equation}
    V_{\text{req}}(x) = \max \left( (1 - \gamma_{\text{DCBF}} \Delta t) V_{\text{cal}}(x), 0 \right)
\end{equation}
where $\gamma_{\text{DCBF}}=1.0$ is the relaxation rate. 

Since our method does not approximate the value function gradients, we utilize a sampling-based strategy to evaluate the DCBF constraint over a batch of candidate controls, $\mathcal{U}_{\text{batch}}$. The subset of safe candidate controls is formulated as:
\begin{equation}
    \mathcal{U}_{\text{safe}} = \left\{ u \in \mathcal{U}_{\text{batch}} \mid V_{\text{cal}}\left(x_{k+1}^{u,d_\phi}\right) \ge V_{\text{req}}(x) \right\}
\end{equation}

The filtered control output is then determined by a switching logic. If the safe control set is non-empty, the filter selects the safe candidate closest to the nominal control. If no sampled control satisfies the safety constraint, the filter triggers the robust fallback recovery policy $\pi^u_\phi$:
\begin{equation}
    u_{\text{filtered}} = 
    \begin{cases} 
        \arg\min_{u \in \mathcal{U}_{\text{safe}}} \| u - u_{\text{nom}} \|_2, & \text{if } \mathcal{U}_{\text{safe}} \neq \emptyset \\
        \pi^u_\phi(x_{\text{clamped}},0), & \text{otherwise}
    \end{cases}
\end{equation}
where $x_{\text{clamped}}$ restricts the state within the strict bounds of the training domain prior to evaluating the fallback policy. Theoretically, if the BRT solution is exact and all encountered disturbances remain strictly within the training bounds, the safe control set is guaranteed to be non-empty. In practice, however, neural network approximation errors or out-of-distribution observations and disturbances may occasionally render the safe set empty. The fallback recovery logic is explicitly introduced to maintain persistent system safety despite these practical imperfections.

\end{appendices}